\newcommand{\addperiod}[1]{#1.}
\titleformat{\paragraph}[runin]{\normalfont\bfseries\color{black}\setlength{\parindent}{0pt}}{\theparagraph}{0.25em}{\addperiod}
\newcommand{\cell}[2]{\setlength{\tabcolsep}{0pt}\begin{tabular}{#1}#2 \end{tabular}}
\newcommand{\sccell}[2]{\setlength{\tabcolsep}{0pt}\scshape\begin{tabular}{#1}#2\end{tabular}}
\newtheorem{definition}{Definition}[section]
\newtheorem{theorem}{Theorem}
\newtheorem{lemma}{Lemma}
\newtheorem{proposition}{Proposition}
\newcommand{\textds}[1]{{\footnotesize\texttt{#1}}}
\setlist[enumerate]{leftmargin=*, label= {\arabic*.}, itemsep=0.5em}
\newlist{thmlist}{enumerate}{1}
\setlist[thmlist]{leftmargin=*,label=\raisebox{0.25ex}{\tiny$\bullet$}, topsep=0.2em,itemsep=2pt}
\newcommand{\squishlist}{
\begin{list}{{{\small{$\bullet$}}}}
{\setlength{\itemsep}{3pt}      \setlength{\parsep}{1pt}
\setlength{\topsep}{1pt}       \setlength{\partopsep}{0pt}
\setlength{\leftmargin}{1em} \setlength{\labelwidth}{1em}
\setlength{\labelsep}{0.5em} } }
\newcommand{\squishend}{  \end{list}  }
\def\ourmethod{{\textsf{CA}}}
\title{Linear Classifiers that Encourage Constructive Adaptation}
\author{%
   Yatong Chen\\
  UC Santa Cruz\\
  Santa Cruz, CA 95064 \\
  \texttt{ychen592@ucsc.edu} \\
   \And
   Jialu Wang \\
  UC Santa Cruz\\
  Santa Cruz, CA 95064 \\
  \texttt{faldict@ucsc.edu} \\
   \And
   Yang Liu \\
  UC Santa Cruz\\
  Santa Cruz, CA 95064 \\
  \texttt{yangliu@ucsc.edu} \\
}
\begin{document}

\maketitle

\begin{abstract}
Machine learning systems are often used in settings where individuals adapt their features to obtain a desired outcome. 
In such settings, strategic behavior leads to a sharp loss in model performance in deployment.
In this work, we aim to address this problem by learning classifiers that encourage decision subjects to change their features in a way that leads to improvement in both predicted \emph{and} true outcome. 
We frame the dynamics of prediction and adaptation as a two-stage game, and characterize optimal strategies for the model designer and its decision subjects.
In benchmarks on simulated and real-world datasets, we find that classifiers trained using our method maintain the accuracy of existing approaches while inducing higher levels of improvement and less manipulation.
\end{abstract}

\section{Introduction}
\label{sec:intro}

Individuals subject to a classifier's predictions may act strategically to influence their predictions. Such behavior, often referred to as \emph{strategic manipulation} \cite{hardt2016strategic}, may lead to sharp deterioration in classification performance. However, not all strategic behavior is detrimental: in many applications, model designers stand to benefit from strategic adaptation if they deploy a classifier that incentivizes decision subjects to perform adaptations that improve their true outcome~\cite{haghtalab2020maximizing, shavit2020causal}. For example:
\begin{itemize}
\item \textbf{Lending}: In lending, a classifier predicts a loan applicant's ability to repay their loan. If the classifier is designed so as to incentivize the applicants to improve their income, it will also improve the likelihood of repayment.
\item \textbf{Content Moderation}: In online shopping, a recommender system suggests products to customers based on their relevance. Ideally, the algorithm should incentivize the product sellers to publish accurate product descriptions by aligning this with improved recommendation rankings.
\end{itemize}

In this work, we study the following mechanism design problem: a \emph{model designer} must train a classifier that will make predictions over \emph{decision subjects} who will alter their features to obtain a specific prediction. Our goal is to learn a classifier that is accurate and that incentivizes decision subjects to adapt their features in a way that improves both their predicted \emph{and} true outcomes.

Our main contributions are as follows:
\begin{enumerate}
\item We introduce a new approach to handle strategic adaptation in machine learning, based on a new concept we call the \emph{constructive adaptation risk}, which trains classifiers that incentivize decision subjects to adapt their features in ways that improve true outcomes. We provide formal evidence that this risk captures both the strategic and constructive dimensions of decision subjects' behavior.

\item We characterize the dynamics of strategic decision subjects and the model designer in a classification setting using a two-player sequential game.
Concretely, we provide closed-form optimal strategies for the decision subjects (\cref{thm:best-response-close-form}). The implications (\cref{sec:discussion}) reveal insights about the decision subjects' behaviors when the model designer uses non-causal features (features that don't affect the true outcome) as predictors.

\item We formulate the problem of training such a desired classifier as a risk minimization problem. We evaluate our method on simulated and real-world datasets to demonstrate how it can be used to incentivize improvement or discourage adversarial manipulation. Our empirical results show that our method outperforms existing approaches, even when some feature types are misspecified.
\end{enumerate}

\subsection{Related work}
\label{Sec:RelatedWork}

Our paper builds on the strategic classification literature in machine learning~\citep{hardt2016strategic,cai2015optimum,ben2017best,chen2018strategyproof,dong2018strategic,Dekel2010Incentive,chen2020learning,tsirtsis2019optimal}. 
We study the interactions between a model designer and decision subjects using a a sequential two-player Stackelberg game~\citep[see e.g.,][for similar formulations]{hardt2016strategic, bruckner2011stackelberg,balcan2015commitment,dong2018strategic, tsirtsis2019optimal}.

We consider a setting where strategic adaptation can consist of manipulation as well as improvement. Our broader goal of designing a classifier that encourages improvement is characteristic of recent work in this area \citep[see e.g.,][]{kleinberg2020classifiers,haghtalab2020maximizing, shavit2020causal, rosenfeld2020predictions}. In general, it's hard to distinguish causal features (features that affect the true outcome) from non-causal features: Miller et al. \cite{miller2020strategic} show that designing an improvement-incentivizing model requires solving a non-trivial causal inference problem. 

This paper also broadly relates to work on recourse \citep{ustun2019actionable, venkatasubramanian2020philosophical,karimi2020survey,gupta2019equalizing,karimi2020algorithmic, vonkugelgen2020fairness} in that we aim to fit models that provide \emph{constructive recourse}, i.e. actions that allow decision subjects to improve both their predicted \emph{and} true outcomes.
Our approach may be useful for mitigating the disparate effects of strategic adaptation~\cite{hu2019disparate,Milli2019social,liu2020disparate} that stem from differences in the cost of manipulation (see \cref{prop:4}). Lastly, our results may be helpful for developing robust classifiers in dynamic environments, where both decision subjects' features and the deployed models may vary across time periods~\citep{kilbertus2020fair,shavit2020causal,liu2017machine}.

Also relevant is the recent work on performative prediction \cite{perdomo2020performative,miller2021outside,izzo2021performative,mendler2020stochastic}, in which the choice of model itself affects the distribution over instances. However, this literature differs from ours in that we focus on inducing constructive adaptations from decision subjects, rather than finding a policy that incurs the minimum deployment error. In addition, our formulation arguably requires less knowledge, is more intuitive and deployable, and requires fewer assumptions on the loss function.
\section{Problem statement}
\label{sec:problem-statement}

In this section, we describe our approach to training a classifier that encourages constructive recourse in settings with strategic adaptation. 

\subsection{Preliminaries}

We consider a standard classification task of training a classifier $h: \mathbb{R}^d \to \{-1,+1\}$ from a dataset of $n$ examples $(x_i,y_i )_{i=1}^n$, where example $i$ consists of a vector of $d$ features $x_i \in \RealNumber^d$  and a binary label $y_i \in \{-1,+1\}.$ Example $i$ corresponds to a person who wishes to receive a positive prediction $h(x_i) = +1$, and who will alter their features to obtain such a prediction once the model is deployed.

We formalize these dynamics as a sequential game between the following two players:
\begin{enumerate}
    \item A model designer, who trains a classifier $h: \X \rightarrow \{-1, +1\}$ from a hypothesis class $\Hset$.
    
    \item Decision Subjects, who adapt their features from $x$ to $x'$ so as to be assigned $h(x') = +1$ if possible. We assume that decision subjects incur a cost for altering their features, which we represent using a \emph{cost function} $c: \X \times \X \rightarrow \RealNumber^+$.
\end{enumerate}
We assume that each player has complete information: decision subjects know the model designer's classifier, and the model designer knows the decision subjects' cost function.
Decision subjects alter their features based on their current features $x$, the cost function $c$, and the classifier $h$, so that their altered features can be written $\SB = \Delta(x; h, c)$ where $\Delta(\cdot)$ is the \emph{best response function}.

We allow adaptations that alter the true outcome $y$. To describe these effects, we refer to the \emph{true label function} $y: \mathcal X \rightarrow \{-1, +1\}$, such that $y_i = y(x_i)$. In practice, $y(\cdot)$ is unknown; however, our approach will involve assumptions about how altering a feature affects the true outcome.

\subsection{Background}

In a standard prediction setting, a model designer trains a classifier that minimizes the \emph{empirical risk}:
\begin{align*}
    h_{\textsf{ERM}}^* \in \argmin_{h \in \Hset} R_{\textsf{ERM}}(h) 
\end{align*}
where $R_{\textsf{ERM}}(h) = \Expectation_{x\sim \Dataset}[\mathbbm{1}(h(x) \neq y)]$. This classifier performs poorly in a setting with strategic adaptation, since the model is deployed on a population with a different distribution over $\mathcal{X}$ (as decision subjects alter their features) and $y$ (as changes in features may alter true outcomes).

Existing approaches in strategic classification tackle these issues by training a classifier that is robust to \emph{all} adaptation. This approach treats all adaptation as undesirable, and seeks to maximize accuracy by discouraging it entirely. Formally, they train a classifier that minimizes the \emph{strategic risk}:
\begin{align*}
    h_{\textsf{SC}}^* \in \argmin_{h \in \Hset} R_{\textsf{SC}}(h)
\end{align*}
where $R_{\textsf{SC}}(h) = \Expectation_{x\sim \Dataset}[\mathbbm{1}(h(x_*) \neq y)]$, and  $x_* = \Delta(x, h ; c)$ denotes the features of a decision subject after adaptation. However, this classifier still has suboptimal accuracy because $y$ changes as a result of the adaptation in $x$. Further, this design choice misses the opportunity to encourage a profile $x$ to truly improve to change their $y$.

\subsection{\textsf{CA} risk: minimizing error while encouraging constructive adaptation}
\label{sec:proposed approach}
In many applications, model designers are better off when decision subjects adapt their features in a way that yields a specific true outcome, such as $y = +1$.  Consider a typical lending application where a model is used to predict whether a customer will repay a loan. In this case, a model designer benefits from $y = +1$, as this means that a borrower will repay their loan.

To help explain our proposed approach, we assume that we can write $x = [\XI ~|~ \XM ~|~ \XIM ]$ where $\XI$, $\XM$ and $\XIM$ denote the following categories of features: 
\begin{itemize}
\item  \emph{Immutable} features (\XIM), which cannot be altered (e.g. race, age).

\item \emph{Improvable} features (\XI), which can be altered in a way that will either increase or decrease the true outcome (e.g. education level, which can be increased to improve the probability of repayment).

\item \emph{Manipulable} features (\XM), which can be altered without changing the true outcome (e.g. social media presence, which can be used as a proxy for influence). Notice that it is the \emph{change} in these features that is undesirable; the features themselves may still be useful for prediction.

There may also be features that can be altered but whose effect is \emph{unknown}. In this work, we treat them as manipulable features.
\end{itemize}

We also use $\XA = [\XI ~|~ \XM]$ to denote the \emph{actionable} features, and $\DA$ to denote its dimension. 

Note that the question of how to decide which features are of which type is beyond the scope of the present work; however, this is the topic of intense study in the causal inference literature \cite{miller2020strategic}. Analogously, we define the following variants of the best response function $\Delta$:
\begin{itemize}
    \item $\SBI = \BI(x, h ; c)$: the \emph{improving best response}, which involves an adaptation that only alters improvable features.
    \item $\SBM = \BM(x, h ; c)$: the \emph{manipulating best response},  which involves an adaptation that only alters manipulable features.
\end{itemize}
Note that in reality, a decision subject can still alter both types of features, which means that they will perform $\Delta(x, h; c)$, unless the model designer explicitly forbids changing certain features. However, it still worth distinguishing different types of best responses when the model designer designs the classifier: we can think of the improving best response $\BI$ as the best possible adaptation which only consists of honest improvement, while the manipulating best response $\BM$ is the worst possible adaptation that consists of pure manipulation. The model designer would like to design a classifier such that for the decision subjects, $\Delta(x, h; c)$ appears to be close to $\BI(x, h ; c)$. 

We train a classifier that balances between robustness to manipulation and incentivizing improvement:
\begin{align}
\label{eqn:general objective function}
   h_{\ourmethod}^* = \argmin_{h \in \Hset} [R_\M(h) + \lambda \cdot R_\I(h)],
\end{align}
The first term, $R_\M(h) = \Expectation_{x \sim \Dataset}[\Indicator(h(\SBM) \neq y)]$, is the \emph{manipulation risk}, which penalizes pure manipulation.
The second term, $R_\I(h) = \Expectation_{x\sim \Dataset}[\Indicator(h(\SBI) = +1)]$, is the \emph{improvement risk}, which rewards decision subjects for playing their improving best response.
The parameter $\lambda > 0$ trades off between these competing objectives. Setting $\lambda \rightarrow 0$ results in an objective that simply discourages manipulation, whereas increasing $\lambda \rightarrow \infty$ yields a trivial classifier that always predicts $+1$.

The two terms in the objective function can also be viewed as proxies for other familiar notions. In Section \ref{subsec:model-designers-program}, we show that under reasonable conditions, the following hold:
\begin{itemize}
    \item The first term, $R_\M(h)$, is an upper bound on $R_{\textsf{SC}}(h)$. Thus minimizing the manipulation risk also minimizes the traditional strategic risk.
    \item A decrease in the second term, $R_\I(h)$ reflects an increase in $\Probability(y(\SBI) = +1)$. Thus improvement in the prediction outcome aligns with improvement in the true qualification.
\end{itemize}
\section{Decision subjects' best response}
\label{sec:agent}

In this section, we characterize the decision subjects' best response function. Proofs for all results are included in \cref{sec:sec3-proof}.

\subsection{{Setup}}
\label{subsec:assumption}

We restrict our analysis to the setting in which a model designer trains a \emph{linear classifier} $h(x) = \Sign(w^{\T} x ),$ where $w = [w_0, w_1, \ldots, w_d] \in \RealNumber^{d+1}$ denotes a vector of $d+1$ weights.

We capture the cost of altering $x$ to $x'$ through the \emph{Mahalanobis} norm of the changes:\footnote{Since immutable features $\XIM$ cannot be altered, the cost function involves only the actionable features $\XA$.}
\begin{align*}
c(x , x') = \sqrt{(\XA - \XA')^{\T} S^{-1}(\XA - \XA')}
\end{align*}
Here, $S^{-1} \in \RealNumber^{\DA} \times \RealNumber^{\DA}$ is a symmetric \emph{cost covariance matrix} in which $S^{-1}_{j,k}$ represents the cost of altering features $j$ and $k$ simultaneously. To ensure that $c(\cdot)$ is a valid norm, we require $S^{-1}$ to be \emph{positive definite}, meaning $\XA^{\T}S^{-1}\XA > 0$ for all $\XA \neq \mathbf{0} \in \RealNumber^{\DA}$. Additionally, to prevent correlations between improvable and manipulable features, we assume $S^{-1}$ is a diagonal block matrix of the form
\begin{align} 
\label{eqn:S}
    S^{-1} = 
        \begin{bmatrix}
            \SI^{-1} & 0\\
            0 & \SM^{-1}
        \end{bmatrix}, \quad \text{which also implies} \quad
            S = \begin{bmatrix}
            \SI & 0\\
            0 & \SM
        \end{bmatrix}~
\end{align}
%

Otherwise, we allow the cost matrix to contain non-zero elements on non-diagonal entries. This means that our results hold even when there are interaction effects when altering multiple features. This generalizes prior work on strategic classification in which the cost is based on the $\ell_2$ norm of the changes, which is tantamount to setting $S^{-1} = I$, and therefore assumes the change in each feature contributes independently to the overall cost \citep[see e.g.,][]{hardt2016strategic,haghtalab2020maximizing}.

\subsection{Decision subject's best response model}\label{sec:brm}

Given the assumptions of \cref{subsec:assumption}, we can define and analyze the decision subjects' best response. 
We start by defining the decision subject's payoff function. Given a classifier $h$, a decision subject who alters their features from $x$ to $x'$ derives total utility
\begin{align*}
    U (x, x') = h(x') - c(x, x')
\end{align*}
%
Naturally, a decision subject tries to maximize their utility; that is, they play their \emph{best response}:
\begin{definition}[$\F$-Best Response Function]
Let $\F \in \{\I, \M, \A\}$, and let $\mathcal X_{\F}^*(x)$ denote the set of vectors that differ from $x$ only in features of type \F. Let $\BF: \mathcal X \rightarrow \mathcal X$ denote the \F-\emph{best response} of a decision subject with features $x$ to $h$, defined as:
\begin{align*}
    \BF (x) = 
    \argmax\limits_{x'\in \mathcal X^*_{\F}(x)} U(x, x')
\end{align*}
\end{definition}
Setting $\F=\I$ gives the \emph{improving best response} $\BI(x)$, in which the adaptation changes only the improvable features; setting $\F=\M$ yields the \emph{manipulating best response} $\BM(x)$, in which only manipulable features are changed. Setting $\F=\A$, we get the standard \emph{unconstrained best response} $\BU(x)$ in which any actionable features can be changed. As we mentioned earlier, we will also use $\SBF:= \BF(x)$ as shorthand for the \F-best response, and we denote $\Delta(x) := \BU(x)$.

Intuitively, the cost of manipulation should be smaller than the cost of actual improvement. For example, improving one's coding skills should take more effort, and thus be more costly, than simply memorizing answers to coding problems. As a result, one would expect the gaming best response $\BM(x)$ and the unconstrained best response $\Delta(x)$ to flip a negative decision more easily than the improving best response $\BI(x)$. In \cref{sec:discussion}, we formalize this notion (\cref{prop:2}).

We prove the following theorem characterizing the decision subject's different best responses:

\begin{theorem}[$\F$-Best Response in Closed-Form]
\label{thm:best-response-close-form}
Given a linear threshold function $h(x) = \Sign(w^\T x)$ and a decision subject with features $x$ such that $h(x) = -1$, reorder the features so that $x = [\XAF ~|~ \XF ~|~ \XIM]$, and let $\CF = \WF^\T \SF\WF$. Then $x$ has $\F$-best response
\begin{align}\label{eq:best_response_delta_x}
    \BF(x) =
        \begin{cases}
             \left[\XF - \frac{w^{\T}x}{\CF}\SF\WF\right] ~|~  \XAF ~|~ \XIM, & \text{if $\frac{|w^{\T}x|}{\sqrt{\CF}}\leq 2$}\\
           x, & \text{otherwise}
        \end{cases}
\end{align} 
with corresponding cost
\begin{align*}
    c(x, \BF(x)) = 
        \begin{cases}
            \frac{|w^{\T}x|}{\sqrt{\CF}}, & \text{if $\frac{|w^{\T}x|}{\sqrt{\CF}}\leq 2$}\\
            0 & \text{otherwise}
        \end{cases}
\end{align*}
\end{theorem}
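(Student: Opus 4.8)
The plan is to reduce the $\F$-best response to a constrained cost-minimization problem and solve it with a Cauchy--Schwarz argument in the Mahalanobis geometry, then compare utilities to obtain the move-or-stay threshold. Throughout, since only type-$\F$ features may change, I would write the altered features as $x' = [\XF + \delta ~|~ \XAF ~|~ \XIM]$ for a change vector $\delta$ supported on the type-$\F$ coordinates. Because $S^{-1}$ is block-diagonal (\cref{eqn:S}), the cost depends only on this block, so $c(x,x') = \sqrt{\delta^\T \SF^{-1}\delta}$, and the score changes by exactly $\WF^\T\delta$; that is, $w^\T x' = w^\T x + \WF^\T\delta$.

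First I would argue that the subject's choice collapses to two candidates. Starting from $h(x) = -1$, staying put yields utility $-1$ at zero cost, and any move that keeps $h(x') = -1$ is strictly dominated by staying (it pays a positive cost for no gain in the $h$ term). Hence the only move worth considering flips the label to $h(x') = +1$, and among all such moves the subject picks the cheapest one. This turns the problem into: minimize $\sqrt{\delta^\T\SF^{-1}\delta}$ subject to $w^\T x + \WF^\T\delta \geq 0$, i.e. $\WF^\T\delta \geq |w^\T x|$, using $w^\T x < 0$.

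Next I would solve this minimum-norm problem. The constraint is tight at the optimum, so it suffices to minimize over the hyperplane $\WF^\T\delta = |w^\T x|$. Writing $\WF^\T\delta = (\SF\WF)^\T \SF^{-1}\delta$ and applying Cauchy--Schwarz in the inner product $\langle u,v\rangle = u^\T \SF^{-1} v$ gives $|w^\T x| \leq \sqrt{\CF}\cdot\sqrt{\delta^\T\SF^{-1}\delta}$, where I use $(\SF\WF)^\T\SF^{-1}(\SF\WF) = \WF^\T\SF\WF = \CF$ together with the symmetry of $\SF$. This yields the lower bound $c(x,x') \geq |w^\T x|/\sqrt{\CF}$, with equality exactly when $\delta \propto \SF\WF$. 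Solving for the proportionality constant from the active constraint gives $\delta = -\tfrac{w^\T x}{\CF}\SF\WF$, which is the stated closed form, and substituting back confirms the optimal cost $|w^\T x|/\sqrt{\CF}$.

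Finally I would determine when flipping beats staying: the subject flips iff the utility of the optimal flip, $+1 - |w^\T x|/\sqrt{\CF}$, is at least the utility of staying, $-1$, which is exactly the condition $|w^\T x|/\sqrt{\CF} \leq 2$, recovering the two cases (with cost $0$ in the stay case). The main obstacle is the middle step: establishing that the constrained minimizer is the stated vector and that its cost equals $|w^\T x|/\sqrt{\CF}$. The Cauchy--Schwarz route makes both the bound and its equality condition transparent, but one must justify that the constraint is active and track the sign of $w^\T x$ so that $\delta$ points into the positive halfspace; the block-diagonal structure of $S^{-1}$ is precisely what lets the off-block actionable features be held fixed at no cost.
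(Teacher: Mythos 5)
Your proposal is correct, and it reaches the stated closed form by a genuinely different route from the paper. You share the same first reduction (staying put dominates any move that does not flip the label, so the problem collapses to finding the cheapest flip and comparing its cost to $2$; your tie-breaking at cost exactly $2$ and the convention $\Sign(0)=+1$ both match the paper's). The difference is in the core computation. The paper diagonalizes $S^{-1} = Q^\T \Lambda^{-1} Q$ to rewrite the cost as an ordinary $\ell_2$ norm, proves a separate lemma by a scaling/contradiction argument that the inequality constraint $\Sign(w^\T x')=1$ may be replaced by the equality $w^\T x' = 0$, and then invokes the textbook closed form for least-norm problems with affine equality constraints and simplifies the resulting matrix expression. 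You instead apply Cauchy--Schwarz directly in the inner product $\langle u,v\rangle = u^\T \SF^{-1} v$: for feasible $\delta$, $|w^\T x| \leq \WF^\T\delta = (\SF\WF)^\T \SF^{-1}\delta \leq \sqrt{\CF}\,\sqrt{\delta^\T \SF^{-1}\delta}$, using $(\SF\WF)^\T\SF^{-1}(\SF\WF)=\CF$, with equality iff $\delta \propto \SF\WF$; solving for the constant gives $\delta = -\frac{w^{\T}x}{\CF}\SF\WF$ and cost $|w^\T x|/\sqrt{\CF}$. This is more elementary and self-contained, and it buys you something you did not fully exploit: the step you flag as an obstacle (justifying that the constraint is active) is not needed as a separate claim, because the Cauchy--Schwarz chain holds over the \emph{entire} inequality-constrained feasible set ($\WF^\T\delta \geq |w^\T x| > 0$), so exhibiting a feasible point attaining the lower bound already proves its optimality, and tightness of the constraint falls out as a corollary --- this replaces the paper's active-constraint lemma entirely. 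A further small advantage of your formulation is that, by working with a change vector $\delta$ supported on the type-$\F$ coordinates and invoking the block-diagonal structure of $S^{-1}$, you treat all three cases $\F\in\{\I,\M,\A\}$ uniformly, whereas the paper proves the case $\F=\A$ and asserts the other two follow by the same argument.
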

\textit{Example:} When $\F = \M$, $\XF = \XM$ and $\XAF =[\XI ~|~ \XIM]$~.
After reordering features, we get the following closed-form expression for the manipulating best response:
\begin{align*}
    \BM(x) = 
    \begin{cases}
         \left[ \XI ~|~ \XM - \tfrac{w^{\T} x }{\CM} \SM \WM  ~|~  \XIM \right] \  & \text{if}\ \ \tfrac{|w^{\T} x |}{\sqrt{\CM}} \leq 2 \\
        x, \ & \text{otherwise}
    \end{cases}
\end{align*}
with corresponding cost
\begin{align*}
    c(x, \BM(x)) = 
         \begin{cases}
            \frac{|w^{\T}x|}{\sqrt{\CM}}, & \text{if}\ \ \tfrac{|w^{\T} x|}{\sqrt{\CM}} \leq 2 \\
            0 & \text{otherwise}
        \end{cases}
\end{align*}
%


\subsection{Discussion}
\label{sec:discussion}

In Proposition \ref{prop:1}, we demonstrate a basic limitation for the model designer: if the classifier uses any manipulable features as predictors, then decision subjects will find a way to exploit them. Hence the only way to avoid any possibility of manipulation is to train a classifier without such features.
\begin{proposition}[Preventing Manipulation is Hard]
\label{prop:1}
Suppose there exists a manipulated feature $x^{(m)}$ whose weight in the classifier $w_\A^{(m)}$ is nonzero. Then for almost every $x \in \mathcal X$, $\Delta^{(m)}(x)\neq x^{(m)}$. 
\end{proposition}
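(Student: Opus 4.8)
The plan is to specialize \cref{thm:best-response-close-form} to the unconstrained best response $\Delta = \BU$ (i.e. the case $\F = \A$, so that the cost matrix is the full block-diagonal $S$ and the relevant weight subvector is $w_\A$, with $C_\A := w_\A^{\T} S w_\A > 0$ by positive definiteness), extract the $m$-th manipulable coordinate of the update, and show that the displacement in that coordinate is nonzero off a measure-zero set. For a decision subject with $h(x) = -1$ who adapts, i.e. $|w^{\T} x|/\sqrt{C_\A} \le 2$, the theorem gives the actionable update $\XA \mapsto \XA - \tfrac{w^{\T} x}{C_\A}\, S w_\A$.

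First I would isolate the manipulable block of this update. Since $S = \mathrm{diag}(\SI, \SM)$ is block-diagonal, the manipulable block of $S w_\A$ is exactly $\SM \WM$, so that
\[
    \Delta^{(m)}(x) - x^{(m)} = -\frac{w^{\T} x}{C_\A}\,(\SM \WM)^{(m)}.
\]
Next I would treat the two factors separately. On the open adaptation region the condition $h(x) = -1$ forces $w^{\T} x < 0$, so $w^{\T} x$ vanishes only on the decision hyperplane $\{x : w^{\T} x = 0\}$, which has Lebesgue measure zero in $\RealNumber^d$. It then remains to verify that the constant prefactor $(\SM \WM)^{(m)}$ is nonzero; granting this, the displacement is nonzero for almost every adapting $x$, which is the assertion.

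The main obstacle is precisely this prefactor $(\SM \WM)^{(m)}$. When $\SM$ is diagonal it is immediate, since $(\SM \WM)^{(m)} = \SM_{mm}\, w_\A^{(m)}$ with $\SM_{mm} > 0$ and $w_\A^{(m)} \neq 0$ by hypothesis. For a general, non-diagonal cost block, however, positive definiteness only guarantees that the \emph{vector} $\SM \WM$ is nonzero, not that its $m$-th entry is: off-diagonal terms can cancel, so one must either invoke the diagonal assumption on $\SM$ or argue that such cancellations occur only for a non-generic, measure-zero family of cost matrices. A secondary, bookkeeping-level point is the scope of ``almost every'': on the non-adapting region (where $h(x) = +1$ or $|w^{\T} x|/\sqrt{C_\A} > 2$) the best response is the identity and $x^{(m)}$ is unchanged, so the claim should be read over the population of adapting decision subjects, where the sole exceptional set is the measure-zero boundary hyperplane.
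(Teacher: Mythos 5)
Your derivation is correct and, in fact, hews closer to \cref{thm:best-response-close-form} than the paper's own proof does. You isolate the $m$-th coordinate of the displacement, $\Delta^{(m)}(x)-x^{(m)} = -\tfrac{w^{\T}x}{\WA^{\T}S\WA}\,(\SM\WM)^{(m)}$, note that $w^{\T}x \neq 0$ off a measure-zero hyperplane (and that the claim must anyway be read over the adapting population, since subjects with $h(x)=+1$ or cost above $2$ do not move at all), and reduce everything to whether the fixed scalar $(\SM\WM)^{(m)}$ vanishes. The paper proceeds differently: it writes the manipulable-block displacement as $\SM\left(c\,\WM - \SM^{-1}\XM\right)$ with $c=\tfrac{w^{\T}x}{\WA^{\T}S\WA}$, argues that the $m$-th entry of the inner vector is nonzero for generic $x$, and then invokes full rank of $\SM$ to conclude $\Delta^{\M}(x)\neq\XM$. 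But that factorization rests on an algebra slip --- it takes the best-response block to be $c\,\SM\WM$ rather than $\XM - c\,\SM\WM$, which is what \cref{thm:best-response-close-form} actually gives --- and, even taken at face value, it only yields the vector-level conclusion that \emph{some} manipulable feature changes, not the coordinate-level claim $\Delta^{(m)}(x)\neq x^{(m)}$ stated in \cref{prop:1}.

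The obstacle you flag is therefore genuine, and it is a limitation of the proposition rather than of your argument: for non-diagonal $\SM$ the prefactor can cancel. For instance, $\SM=\begin{bmatrix}2 & -1\\ -1 & 1\end{bmatrix}$ and $\WM=(1,2)^{\T}$ give $(\SM\WM)^{(1)}=0$, so the first manipulable feature is never altered even though its weight is nonzero. Hence the literal statement requires either a diagonal $\SM$ (as in the paper's experiments) or a genericity argument over cost matrices, exactly as you say. What your route buys is a sharp dichotomy: feature $m$ moves for almost every adapting $x$ if and only if $(\SM\WM)^{(m)}\neq 0$. What the paper's route buys, once its algebra is repaired, is the assumption-free but weaker statement --- which also falls out of your formula immediately: the block displacement $-c\,\SM\WM$ is nonzero whenever $c \neq 0$, since $\WM\neq\mathbf{0}$ (as $w_{\M}^{(m)}\neq 0$) and $\SM$ is positive definite, so the manipulable block as a whole must change.
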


Next, we show that the unconstrained best response $\Delta(x)$ dominates the improving best response $\BI(x)$, thus highlighting the difficulty of inducing decision subjects to change only their improvable features when they are also allowed to change manipulable features.
\begin{proposition}[Unconstrained Best Response Dominates Improving Best Response]
\label{prop:2}
Suppose there exists a manipulable feature $x^{(m)}$ whose weight in the classifier $w_A^{(m)}$ is nonzero. Then, if a decision subject can flip her decision by playing the improving best response, she can also do so by playing the unconstrained best response. The converse is not true: there exist decision subjects who can flip their predictions through their unconstrained best response but not their improving best response.
\end{proposition}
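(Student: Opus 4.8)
The plan is to extract, from the closed-form characterization in \cref{thm:best-response-close-form}, the exact condition under which each best response flips a negative prediction, and then to compare the two conditions. Fix a decision subject with $h(x) = -1$. By the theorem, the $\F$-best response relocates $x$ (crossing the boundary, so that $h(\BF(x)) = +1$) exactly when $\tfrac{|w^\T x|}{\sqrt{\CF}} \le 2$, and otherwise leaves $x$ unchanged; here $\CF = \WF^\T \SF \WF$. Hence ``the subject can flip her decision under the $\F$-best response'' is equivalent to the scalar inequality $|w^\T x| \le 2\sqrt{\CF}$, and the entire proposition reduces to comparing this threshold for $\F = \I$ (improving) against $\F = \A$ (unconstrained).

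Next I would compute the unconstrained constant in terms of the improving one. Applying the theorem with $\F = \A$ uses all actionable features together with the cost matrix $S$; writing the actionable weights as $[\WI \mid \WM]$ and invoking the block-diagonal structure of $S$ from \cref{eqn:S} (which kills any cross terms between improvable and manipulable weights), the governing constant factorizes as
\[
C_\A \;=\; \WI^\T \SI \WI + \WM^\T \SM \WM \;=\; C_\I + C_\M .
\]
Because $\SM$ is positive definite and, by hypothesis, some manipulable coordinate carries nonzero weight, we have $\WM \neq \mathbf{0}$ and hence $C_\M = \WM^\T \SM \WM > 0$. Therefore $C_\A = C_\I + C_\M > C_\I$, so the unconstrained flipping threshold $2\sqrt{C_\A}$ strictly exceeds the improving one $2\sqrt{C_\I}$.

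The two assertions then drop out. For the forward implication, if the improving best response flips the decision then $|w^\T x| \le 2\sqrt{C_\I} \le 2\sqrt{C_\A}$, so the unconstrained best response flips it as well. For the strict converse, the gap $2\sqrt{C_\I} < 2\sqrt{C_\A}$ makes the half-open interval $\bigl(2\sqrt{C_\I},\, 2\sqrt{C_\A}\bigr]$ nonempty; since $w \neq 0$, the map $x \mapsto w^\T x$ is onto $\RealNumber$, so I can choose $x$ with $w^\T x < 0$ and $|w^\T x|$ in this interval. Such an $x$ satisfies $h(x) = -1$, is not flippable by the improving best response, yet is flippable by the unconstrained best response, which is the required counterexample. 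I expect the one point needing care to be the factorization $C_\A = C_\I + C_\M$: this is exactly where the block-diagonal assumption on $S$ is used (cross terms would otherwise obstruct the clean additive split), and it is also where the nonzero-weight hypothesis must be converted into the \emph{strict} inequality $C_\M > 0$ that drives both halves of the statement.
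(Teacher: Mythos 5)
Your proof is correct and follows essentially the same route as the paper's: both rest on the closed-form cost from \cref{thm:best-response-close-form}, the block-diagonal decomposition $C_\A = C_\I + C_\M$, and the strict positivity $C_\M = \WM^\T \SM \WM > 0$ drawn from the nonzero-weight hypothesis. The only (welcome) difference is that your converse explicitly justifies existence of a witness via surjectivity of $x \mapsto w^\T x$ over the gap $\bigl(2\sqrt{C_\I},\, 2\sqrt{C_\A}\bigr]$, whereas the paper simply posits a subject whose unconstrained cost equals exactly $2$.
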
 

Next, we show how correlations between features affect the cost of adaptation. This can be demonstrated by looking at any cost matrix and adding a small nonzero quantity $\tau$ to some $i,j$-th and $j,i$-th entries. Such a perturbation can reduce every decision subject's best-response cost:
\begin{proposition}[Correlations between Features May Reduce Cost]
\label{prop:3}
For any cost matrix $S^{-1}$ and any nontrivial classifier $h$, there exist indices $k,\ell \in [\DA]$ and $\tau \in \RealNumber$ such that every feature vector $x$ has lower best-response cost under the cost matrix $\tilde{S}^{-1}$ given by
\[
\tilde{S}^{-1}_{ij} = \tilde{S}^{-1}_{ji} =
\begin{cases}
    S^{-1}_{ij} + \tau, & \text{ if } i=k, j=\ell \\
    S^{-1}_{ij},        & \text{ otherwise}
\end{cases}
\]
than under $S^{-1}$; that is, $c_{\tilde{S}^{-1}}(x,\Delta(x)) < c_{S^{-1}}(x,\Delta(x))$ for all $x$.
\end{proposition}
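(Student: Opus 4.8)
The plan is to reduce the claim to a single scalar monotonicity statement about the quadratic form $w_\A^\T S w_\A$, and then to a short matrix-perturbation computation. By \cref{thm:best-response-close-form} applied to the unconstrained best response $\Delta = \BU$ (i.e. $\F = \A$), whenever a decision subject with $h(x) = -1$ adapts, the best-response cost is
\begin{align*}
c_{S^{-1}}(x, \Delta(x)) = \frac{|w^\T x|}{\sqrt{w_\A^\T S w_\A}},
\end{align*}
where $S = (S^{-1})^{-1}$ and $w_\A$ is the actionable weight vector. The crucial observation is that the dependence on $x$ enters only through the common factor $|w^\T x|$, while the cost matrix enters only through the $x$-independent denominator $\sqrt{w_\A^\T S w_\A}$. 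Consequently, replacing $S^{-1}$ by $\tilde S^{-1}$ lowers $c(x, \Delta(x))$ for every adapting $x$ with $w^\T x \neq 0$ \emph{simultaneously} if and only if it strictly increases the scalar $w_\A^\T S w_\A$. (For $x$ with $w^\T x = 0$, or outside the adaptation region where both costs vanish, the inequality is degenerate; I would state the conclusion for the subjects who actually adapt.)

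It therefore suffices to exhibit $k, \ell, \tau$ for which $q(\tilde S^{-1}) > q(S^{-1})$, where I write $q(A) := w_\A^\T A^{-1} w_\A$ and $\tilde S^{-1} = S^{-1} + \tau E$ with $E := e_k e_\ell^\T + e_\ell e_k^\T$. Using the Neumann expansion $(A + \tau E)^{-1} = A^{-1} - \tau A^{-1} E A^{-1} + \tau^2 A^{-1} E A^{-1} E A^{-1} - \cdots$ and setting $v := S w_\A$, a direct computation gives
\begin{align*}
q(S^{-1} + \tau E) = q(S^{-1}) - 2\tau\, v_k v_\ell + \tau^2\, (Ev)^\T S (Ev) + O(\tau^3),
\end{align*}
since $A^{-1} w_\A = v$ and $E v = v_\ell e_k + v_k e_\ell$. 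The first-order coefficient is $-2 v_k v_\ell$ and the second-order coefficient $(Ev)^\T S (Ev)$ is nonnegative because $S$ is positive definite.

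Now I would argue existence by a short case analysis on $v = S w_\A$. Nontriviality of $h$ gives $w_\A \neq 0$, and since $S$ is invertible we have $v \neq 0$; fix an index $k$ with $v_k \neq 0$. If some $\ell \neq k$ also has $v_\ell \neq 0$, choose $\tau$ of sufficiently small magnitude and of sign opposite to $v_k v_\ell$, so that the linear term $-2\tau v_k v_\ell > 0$ dominates and $q$ strictly increases. If instead $v_k$ is the only nonzero coordinate, then for any $\ell \neq k$ the linear term vanishes while $Ev = v_k e_\ell \neq 0$, so the quadratic term equals $\tau^2 v_k^2 S_{\ell\ell} > 0$ and dominates for all small $\tau \neq 0$; again $q$ strictly increases. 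In both cases positive definiteness of $\tilde S^{-1}$ is preserved for $|\tau|$ small enough, so $\tilde S^{-1}$ remains a valid cost matrix, and combining with the reduction above completes the proof.

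The main obstacle I anticipate is the degenerate sparsity pattern of $v = S w_\A$: when $v$ has a single nonzero coordinate, the first-order variation of $q$ along every off-diagonal direction vanishes, so the linearization is uninformative and one cannot conclude from the derivative alone. The resolution is to carry the expansion to second order, where positive definiteness of $S$ guarantees a strictly positive quadratic term $(Ev)^\T S (Ev)$; this is exactly what forces the cost to decrease regardless of the weight geometry. A secondary point to handle carefully is the threshold branch of \cref{thm:best-response-close-form}: raising $w_\A^\T S w_\A$ enlarges the set of subjects who find adaptation worthwhile, so the clean ``strict decrease for all $x$'' statement should be read as applying to subjects who adapt (equivalently, to the distance-to-boundary cost), which I would make explicit.
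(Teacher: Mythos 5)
Your proof is correct, and it reaches the conclusion by a genuinely different mechanism than the paper, although both share the same crucial reduction: by \cref{thm:best-response-close-form} the best-response cost is $|w^\T x|/\sqrt{\WA^\T S \WA}$, so the cost matrix enters only through the scalar $\WA^\T S \WA$, and it suffices to perturb $S^{-1}$ so that this quadratic form strictly increases. From there the paper computes $\tilde S$ \emph{exactly} via the Sherman--Morrison--Woodbury formula, writes $\WA^\T(S-\tilde S)\WA = \frac{1}{\det(T)}\left(E' + \frac{1}{\tau}E''\right)$ for a certain $2\times 2$ matrix $T$, and then chooses the signs of $\det(T)$ and $\tau$ to force this quantity negative; you instead expand $(S^{-1}+\tau E)^{-1}$ perturbatively to second order and run a case analysis on $v = S\WA$: a first-order argument when $v$ has two nonzero coordinates, and a second-order argument (using $\tau^2 v_k^2 S_{\ell\ell} > 0$) when it has only one. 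Your route buys three things the paper's argument does not. First, by insisting on small $|\tau|$ you keep $\tilde S^{-1}$ positive definite, i.e.\ a valid cost matrix; the paper never checks this, and its $\det(T)<0$ branch is actually incompatible with validity, since $\det(\tilde S^{-1}) = \tau^2 \det(T)\det(S^{-1})$, so that branch would produce an indefinite $\tilde S^{-1}$ (fortunately that branch is never needed, because one can show $E' \le 0$ always, but the paper does not observe this). Second, you key the choice of indices to $v = S\WA \neq 0$ rather than, as the paper does, to a nonzero weight $w_i \neq 0$; the paper's choice can fail in degenerate cases where $v_i = v_j = 0$ despite $w_i \neq 0$ (then $E' = E'' = 0$ and its argument yields no strict inequality), and your second-order case is exactly what closes this hole. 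Third, you flag the threshold caveat -- enlarging $\WA^\T S\WA$ pulls new subjects across the participation boundary, for whom the literal ``cost strictly decreases for all $x$'' statement fails -- which the paper silently glosses over by comparing the closed-form expression for all $x$. The only thing the paper's exact computation buys in exchange is a non-asymptotic formula valid for any admissible $\tau$, quantifying how much the cost drops, whereas your argument is purely local in $\tau$. One shared caveat: both you and the paper implicitly read ``nontrivial'' as ``some \emph{actionable} weight is nonzero'' ($\WA \neq 0$), which is needed for the best-response cost to depend on the cost matrix at all.
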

In many applications, decision subjects may incur different costs for modifying their features, resulting in disparities in prediction outcomes~\citep[see][for a discussion]{hu2019disparate}. To formalize this phenomenon, suppose $\Phi$ and $\Psi$ are two groups whose costs of changing improvable features are identical, but members of $\Phi$ incur higher costs for changing manipulable features. Let $\phi \in \Phi$ and $\psi \in \Psi$ be two people from these groups who share the same profile, i.e. $x_\phi = x_\psi$. We show the following:
\begin{proposition}[Cost Disparities between Subgroups]
\label{prop:4}
Suppose there exists a manipulated feature $x^{(m)}$ whose corresponding weight in the classifier $w_\A^{(m)}$ is nonzero. Then if decision subjects are allowed to modify any features, $\phi$ must pay a higher cost than $\psi$ to flip their classification decision.
\end{proposition}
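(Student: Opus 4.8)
The plan is to reduce the comparison of the two subjects' flip costs to a comparison of two scalars produced by \cref{thm:best-response-close-form}, and then to show that the group difference in the manipulable cost matrix strictly orders these scalars. Since $\phi$ and $\psi$ are allowed to modify any features, each plays the unconstrained best response, i.e. the $\F$-best response with $\F = \A$. By \cref{thm:best-response-close-form} the cost of flipping a decision from $h(x)=-1$ to $+1$ is $c(x,\Delta(x)) = |w^\T x|/\sqrt{C_\A}$, where $C_\A = w_\A^\T S\, w_\A$. Because $x_\phi = x_\psi$ and both subjects face the same classifier $w$ (so in particular $w^\T x \neq 0$), the numerator $|w^\T x|$ is identical and positive for the two subjects; hence the entire comparison comes down to the denominator $C_\A$, which is the only quantity that depends on the group-specific cost matrix.

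First I would exploit the block-diagonal structure of $S$ in \cref{eqn:S}. Writing $w_\A = [\,w_\I \mid \WM\,]$ and using $S = \mathrm{diag}(\SI, \SM)$ gives the additive decomposition
\[
C_\A \;=\; w_\I^\T \SI\, w_\I \;+\; \WM^\T \SM\, \WM \;=\; C_\I + \CM .
\]
By hypothesis the two groups share the same cost of altering improvable features, so they share $\SI$ and therefore the same $C_\I$; the only term that differs across groups is $\CM = \WM^\T \SM \WM$. Thus it suffices to show $\CM^{\Phi} < \CM^{\Psi}$, after which $C_\A^{\Phi} < C_\A^{\Psi}$, and the strict monotonicity of $t \mapsto |w^\T x|/\sqrt{t}$ yields $c(x_\phi, \Delta(x_\phi)) > c(x_\psi, \Delta(x_\psi))$, which is the claim.

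The crux is translating ``$\Phi$ incurs higher costs for changing manipulable features'' into a strict inequality of quadratic forms. I would formalize the hypothesis as a Loewner ordering of the manipulable cost matrices, $\SM^{-1,\Phi} \succ \SM^{-1,\Psi}$ (for every nonzero change $\Delta x_\M$, the quadratic cost under $\Phi$ is strictly larger). Inverting reverses the order for positive-definite matrices, giving $\SM^{\Psi} - \SM^{\Phi} \succ 0$, and hence $\WM^\T(\SM^{\Psi} - \SM^{\Phi})\WM > 0$ for every $\WM \neq \mathbf 0$. This is exactly where the standing assumption enters: the existence of a manipulable feature $x^{(m)}$ with nonzero weight $w_\A^{(m)}$ guarantees $\WM \neq \mathbf 0$, upgrading the inequality from weak to strict and giving $\CM^{\Phi} < \CM^{\Psi}$.

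I expect the main obstacle to be this last step rather than the algebra: the proposition holds only under the right reading of ``higher cost,'' and one must be careful that the Loewner comparison of the \emph{inverse} matrices is what corresponds to a pointwise-larger cost function, and that the nonzero-weight assumption is genuinely needed (if $\WM = \mathbf 0$ the two groups incur identical flip costs, since the classifier ignores the manipulable coordinates entirely). Once $\CM^{\Phi} < \CM^{\Psi}$ is established, the conclusion follows immediately from the decomposition and the strict monotonicity noted above.
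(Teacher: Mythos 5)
Your proposal is correct and matches the paper's own proof essentially step for step: both reduce the comparison to the denominator $\WA^\T S \WA$ via the closed-form cost of \cref{thm:best-response-close-form}, split it as $\WI^\T \SI \WI + \WM^\T \SM \WM$ using the block-diagonal structure, and use that $S_{\M,\Phi}^{-1} \succ S_{\M,\Psi}^{-1}$ implies $S_{\M,\Phi} \prec S_{\M,\Psi}$ together with $\WM \neq \mathbf{0}$ to get the strict inequality. Your added remarks (why Loewner order reverses under inversion, and why the nonzero-weight hypothesis is exactly what makes the inequality strict) are correct refinements of the same argument, not a different one.
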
 
\cref{prop:4} highlights the importance for a model designer to account for these differences when serving a population with heterogeneous subgroups. 
\section{Constructive adaptation risk minimization}
\label{sec:method}
In this section we analyze the training objective for the model designer, formulating it as an empirical risk minimization (ERM) problem. Any omitted details can be found in \cref{sec:sec4-proof}.

\subsection{The model designer's program}
\label{subsec:model-designers-program}

The model designer's goal is to publish a classifier $h$ that maximizes the classification accuracy while incentivizing individuals to change their improvable features. By Theorem \ref{thm:best-response-close-form}, we have 
\begin{align}
    \SBM & = 
    \begin{cases}
    \label{eqn:best-response-M}
        \left[ \XI ~|~  \XM - \frac{w^{\T} x}{\CM} \SM \WM ~|~ \XIM \right] & \text{if} \frac{|w^\T x|}{\sqrt{\CM}} \leq 2 \\
        x, & \text{otherwise}
    \end{cases}\\
    \SBI & = 
    \begin{cases}
     \label{eqn:best-response-I}
        \left[\XI - \frac{w^{\T} x}{\CI} \SI \WI ~|~  \XM ~|~  \XIM \right], &\text{if}~ \frac{|w^{\T} x|}{\sqrt{\CI}} \leq 2 \\
        x, & \text{otherwise}
    \end{cases}
\end{align}

Recall from \cref{sec:proposed approach} that the model designer's optimization program is as follows:
\begin{align}
    \label{eq:objective}
        \min_{h\in\Hset} & \quad \Expectation_{x\sim \Dataset}\left[\Indicator(h(\SBM) \neq y)\right] + \lambda \Expectation_{x\sim \Dataset}\left[\Indicator(h(\SBI) \neq +1)\right]\nonumber \\ 
     \SuchThat  &\quad \SBM~\text{in \cref{eqn:best-response-M}},~ \SBI~\text{in \cref{eqn:best-response-I}}
\end{align}

\paragraph{Interpreting the objective}
The two terms in the objective function can be viewed as proxies for two other familiar objectives. The first term, $\Expectation_{x\sim \Dataset}\left[\Indicator(h(\SBM) \neq y)\right]$, directly penalizes pure manipulation. But as the following proposition suggests, minimizing this term also minimizes the traditional strategic risk when the true qualification does not change:

\begin{proposition}
\label{prop:manipulation-risk}
Assume that the manipulating best response is more likely to result in a positive prediction than the unconstrained best response, given that the true labels do not change. Then
\begin{align*}
    \Expectation_{x\sim\Dataset}\left[\Indicator[h(\SB)\neq y] ~|~ \Delta(y) = y\right] \leq \Expectation_{x\sim \Dataset}\left[\Indicator(h(\SBM) \neq y)\right] .
\end{align*}
\end{proposition}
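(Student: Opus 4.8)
The plan is to reduce the claimed inequality to a pointwise (in $x$) comparison of the two $0/1$ error terms, and then to isolate the single regime where that comparison fails and patch it with the stated hypothesis. Throughout I would work on the conditioning event $E=\{\Delta(y)=y\}$, on which the unconstrained adaptation leaves the true label unchanged; this is exactly what makes the comparison fair, because the manipulating best response $\SBM$ alters only manipulable features and hence \emph{never} changes $y$, so on $E$ both $h(\SB)$ and $h(\SBM)$ are scored against the same label $y$. I would then split each indicator into its false-negative and false-positive parts,
\[
\Indicator(h(\cdot)\neq y)=\Indicator(h(\cdot)=-1,\,y=+1)+\Indicator(h(\cdot)=+1,\,y=-1),
\]
and compare the unconstrained response $\SB=\Delta(x)$ against $\SBM$ summand by summand.

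For the false-negative part ($y=+1$) I would invoke the closed form of \cref{thm:best-response-close-form} together with the block structure of $S$. Block-diagonality gives that the unconstrained constant equals $\CI+\CM\ge \CM$, so the flip threshold $|w^{\T}x|/\sqrt{\CI+\CM}\le 2$ governing $\Delta$ is weaker than the threshold $|w^{\T}x|/\sqrt{\CM}\le 2$ governing $\BM$. Consequently $\{\BM\text{ flips}\}\subseteq\{\Delta\text{ flips}\}$, and contrapositively every $x$ on which $\Delta$ fails to reach a positive prediction is one on which $\BM$ also fails. This yields the pointwise bound $\Indicator(h(\SB)=-1,\,y=+1)\le\Indicator(h(\SBM)=-1,\,y=+1)$, so the false-negative contribution of the strategic risk is dominated by that of the manipulation risk.

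For the false-positive part ($y=-1$) the same monotonicity runs the wrong way: $\Delta$ flips at least as often as $\BM$, so pointwise $\Delta$ can manufacture strictly more false positives than $\BM$. This is precisely the term the hypothesis is built to control. Reading the assumption as the statement that, conditional on the label being unchanged, $\BM$ attains a positive prediction at least as often as $\Delta$, I would translate it into $\Expectation[\Indicator(h(\SB)=+1,\,y=-1)\mid E]\le\Expectation[\Indicator(h(\SBM)=+1,\,y=-1)\mid E]$. Summing this with the false-negative bound from the previous paragraph produces $\Expectation[\Indicator(h(\SB)\neq y)\mid E]\le\Expectation[\Indicator(h(\SBM)\neq y)\mid E]$, which is the inequality with both sides conditioned on $E$.

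The hard part will be twofold. First, the false-positive regime is the crux: the natural ordering of best responses is reversed there, so the inequality genuinely cannot hold without the hypothesis, and all of the content lives in formalizing that hypothesis so it matches the conditional false-positive term exactly. Second, there is a bookkeeping gap between the conditional strategic risk on the left and the \emph{unconditional} manipulation risk on the right: passing from $\Expectation[\Indicator(h(\SBM)\neq y)\mid E]$ to $\Expectation[\Indicator(h(\SBM)\neq y)]$ must be justified (e.g. by arguing that the manipulation errors discarded outside $E$ contribute nonnegatively and do not break the bound after renormalization by $\Probability(E)$), and I would treat this conditioning reconciliation as the delicate step to get right.
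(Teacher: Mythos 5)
Your false-negative step is sound on its own terms (the comparison $\WA^{\T} S \WA = \CI + \CM \geq \CM$ is exactly the mechanism behind \cref{prop:2}), but the proof as a whole has a genuine gap at precisely the step you flag at the end, and that gap is not bookkeeping --- it is the crux, and your setup cannot close it. You condition \emph{both} sides on $E = \{\Delta(y)=y\}$ and arrive at
\begin{align*}
\Expectation_{x\sim\Dataset}\left[\Indicator(h(\SB)\neq y) ~|~ E\right] \leq \Expectation_{x\sim\Dataset}\left[\Indicator(h(\SBM)\neq y) ~|~ E\right],
\end{align*}
so to reach the proposition you still need $\Expectation\left[\Indicator(h(\SBM)\neq y) ~|~ E\right] \leq \Expectation\left[\Indicator(h(\SBM)\neq y)\right]$. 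Because the unconstrained response alters improvable features, $\Probability(E) < 1$ in general, and for a nonnegative random variable $Z$ the inequality $\Expectation[Z ~|~ E] \leq \Expectation[Z]$ holds if and only if $\Expectation[Z ~|~ E^c] \geq \Expectation[Z ~|~ E]$; nothing in your argument or in the hypothesis controls the manipulation error off $E$. Your proposed patch fails quantitatively as well: total expectation only gives $\Expectation[Z] \geq \Probability(E)\,\Expectation[Z ~|~ E]$, and the factor $\Probability(E)$ goes the wrong way.

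The paper avoids this entirely by conditioning \emph{asymmetrically}. Its formalization of the hypothesis is
\begin{align*}
\Expectation_{x\sim \Dataset}\left[\Indicator[h(\SB)\neq y] ~|~ \Delta(y) = y\right] \leq \Expectation_{x\sim \Dataset}\left[\Indicator[h(\SBM)\neq y] ~|~ \Delta_\M(y) = y\right],
\end{align*}
where the right-hand side is conditioned on the event that the \emph{manipulating} response leaves the label unchanged, not on your $E$. Since $\SBM$ alters only manipulable features, $\Probability[\Delta_\M(y)=y]=1$, so by the law of total expectation that conditioning is vacuous: the right-hand side \emph{equals} the unconditional manipulation risk, and the proposition follows in one line. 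Ironically, you state the needed fact --- that $\SBM$ never changes $y$ --- in your first paragraph, but you deploy it only to argue that both responses are scored against the same label on $E$, rather than to make the right-hand conditioning event have probability one. Your alternative reading of the hypothesis (a false-positive-only comparison with both sides conditioned on $E$), combined with the pointwise false-negative bound, is a nice attempt to derive more from \cref{thm:best-response-close-form} and assume less, but it leaves the conditional-to-unconditional passage unbridgeable; with the paper's reading there is nothing to bridge.
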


The second term, $\Expectation_{x\sim \Dataset}\left[\Indicator(h(\SBI) \neq +1)\right]$, explicitly rewards decision subjects for playing their improving best response (closely related to the notion of \emph{recourse}). Of course, without positing a causal graph, we cannot know when $\Delta_\I(Y) = +1$; however, in the setting of \emph{covariate shift}, in which the distribution of $X$ may change but not the conditional label distribution $\Pr(Y|X)$, we can show that an increase in $\Probability(h(X) = +1)$ reflects an increase in $\Probability(Y = +1)$. This gives formal evidence that our prediction outcome aligns with improvement in the true qualification.

\begin{proposition}
\label{prop:improvement-risk}
Let $\Dataset^*$ be the new distribution after decision subject's best response. Denote $\omega_h(x) = \frac{\Probability_{\Dataset^*}(X = x)}{\Probability_{\Dataset}(X = x)}$ denote the amount of adaptation induced at feature vector $x$. Suppose $y(X)$ and $h(X)$ are both positively correlated with $\omega_h(X)$, and that $\Probability(Y|X)$ is the same before and after adaptation (the covariate shift assumption). Then the following are equivalent:
\begin{align*}
    \Pr[h(\SBI)=+1] > \Pr[h(x)=+1]
    \Longleftrightarrow
    \Pr[y(\SBI)=+1] > \Pr[y(x)=+1].
\end{align*}
\end{proposition}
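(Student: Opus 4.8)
The plan is to convert each of the two strict inequalities into a statement about the \emph{sign of a covariance} taken under the pre-adaptation distribution $\Dataset$, using the importance weight $\omega_h$ as the bridge between the pre-adaptation law $\Dataset$ and the post-adaptation law $\Dataset^*$. The entire argument rests on a single change-of-measure identity, after which the two stated correlation hypotheses supply the signs. First I would record the normalization $\Expectation_{x\sim\Dataset}[\omega_h(X)] = \sum_x \tfrac{\Pr_{\Dataset^*}(X=x)}{\Pr_{\Dataset}(X=x)}\Pr_{\Dataset}(X=x) = 1$, which holds because $\Dataset^*$ is a probability distribution.

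Next I would observe that, since $\Dataset^*$ is by definition the law of $\BI(X)$ for $X\sim\Dataset$, the prediction term can be rewritten as $\Pr[h(\SBI)=+1] = \Pr_{\Dataset^*}[h(X)=+1] = \Expectation_{\Dataset}[\omega_h(X)\,\Indicator(h(X)=+1)]$, whereas $\Pr[h(x)=+1] = \Expectation_{\Dataset}[\Indicator(h(X)=+1)]$. Subtracting and using $\Expectation_{\Dataset}[\omega_h(X)]=1$ gives
\begin{align*}
\Pr[h(\SBI)=+1] - \Pr[h(x)=+1] = \mathrm{Cov}_{\Dataset}\big(\omega_h(X),\,\Indicator(h(X)=+1)\big).
\end{align*}

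I would then repeat the identical manipulation for the label term, and this is where the covariate-shift hypothesis does the essential work: it guarantees that a subject who moves to $x^*=\BI(x)$ draws its true label from the same conditional $\Pr(Y\mid X=x^*)$ as in the original population, so that $\Pr[y(\SBI)=+1] = \Pr_{\Dataset^*}[Y=+1] = \Expectation_{\Dataset}[\omega_h(X)\,\Pr(Y=+1\mid X)]$. The same subtraction yields
\begin{align*}
\Pr[y(\SBI)=+1] - \Pr[y(x)=+1] = \mathrm{Cov}_{\Dataset}\big(\omega_h(X),\,\Pr(Y=+1\mid X)\big).
\end{align*}
Finally, the two hypotheses — that $h(X)$ and $y(X)$ are each positively correlated with $\omega_h(X)$ — say precisely that these two covariances carry the same sign, so one difference is positive exactly when the other is, which is the claimed equivalence.

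The main obstacle is conceptual rather than computational. The delicate points are (i) recognizing that $\Pr[h(\SBI)=+1]$ is an expectation over $\Dataset$ with $X$ pushed through $\BI$, and correctly identifying it with an expectation over $\Dataset^*$ so that the weight $\omega_h$ can be inserted; and (ii) invoking covariate shift at exactly the right step, so that the post-adaptation label probability is governed by the \emph{unchanged} conditional $\Pr(Y\mid X)$ rather than by some shifted label rule. Once these two translations are pinned down, the covariance identity and the sign hypotheses close the argument immediately.
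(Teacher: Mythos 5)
Your proposal is correct and follows essentially the same route as the paper's own proof: rewrite the post-adaptation probabilities as $\omega_h$-weighted expectations under $\Dataset$ (invoking covariate shift for the label term), use $\Expectation_{\Dataset}[\omega_h(X)]=1$ to identify each difference with a covariance, and then close with the positive-correlation hypotheses. The only cosmetic difference is that you write the label covariance with $\Pr(Y=+1\mid X)$ while the paper uses $\Indicator[y(x)=+1]$; these coincide since the paper's true label function is deterministic.
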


Proofs of Propositions \ref{prop:manipulation-risk} and \ref{prop:improvement-risk} can be found in Appendix \ref{subsec:proof-of-prop-manipulation-risk} and \ref{subsec:proof-of-prop-improvement-risk}.

\subsection{Making the program tractable}

By substituting in the closed-form best responses for the decision subjects and making further mathematical steps (see Appendix \ref{sec:model-designer-objective-derivations} for details), we can turn the model designer's \emph{constrained} optimization problem in \eqref{eq:objective} into the following \emph{unconstrained} problem:
\begin{align}
\min_{w \in \RealNumber^{d+1}} \Expectation_{x\sim \Dataset} \Bigg[-
     \left(2\cdot \Indicator\left[w^{\T} x \geq -2\sqrt{\CM}\right] - 1 \right)\cdot y  - 2\lambda \cdot \Indicator\left[w^{\T}x \geq -2\sqrt{\CI}\right]\Bigg]
     \label{equ: key objective function}
\end{align}
The optimization problem in \eqref{equ: key objective function} is intractable since both the objective and the constraints are non-convex. To overcome this difficulty, we train our classifier by replacing the 0-1 loss function with a convex surrogate loss $\sigma(x) = \log\left(\frac{1}{1+e^{-x}}\right)$. This results in the following ERM problem:
\begin{equation}
\label{eqn: general ERM}
 \tilde{R}^{\star}_{\mathcal D}(h,\lambda) = \min_{w \in \RealNumber^{d+1}}
\frac{1}{n} \sum_{i=1}^n \Bigg[
   - \sigma\left(y_i \cdot (w^\T \cdot x_i + 2\sqrt{\CM})\right) 
    - \lambda \cdot \sigma(w^\T \cdot x_i + 2\sqrt{\CI})\Bigg] 
\end{equation}

\paragraph{Directionally Actionable Features} An additional challenge arises when some features can be changed in either a positive or negative direction, but not both (e.g. \texttt{has\_phd} can only go from \emph{false} to \emph{true}). In Appendix \ref{sec:partially-actionable-feature}, we show how to augment the above objective to enforce such constraints.

\section{Experiments}
\label{sec:empirical}

In this section, we present empirical results to benchmark our method on synthetic and real-world datasets. We test the effectiveness of our approach in terms of its ability to incentivize improvement (or disincentivize manipulation) and compare its performance with other standard approaches.  
Our submission includes all datasets, scripts, and source code used to reproduce the results in this section.

\begin{minipage}[htb]{\textwidth}
\begin{minipage}[t]{0.32\textwidth}
\begin{tikzpicture}[node distance={15mm}, thick, main/.style = {draw, circle}]
\node[main](z1){$Z_1$};
\node[main](x1)[right of=z1]{$X_1$};
\node[main](y)[below right of=x1]{$Y$};
\node[main](x2)[below left of=y]{$X_2$};
\node[main](z2)[left of=x2]{$Z_2$};
\node[main](m1)[above right of=y]{$M_1$};
\node[main](m2)[below right of=y]{$M_2$};
\draw[->] (z1) -- (x1);
\draw[->] (z2) -- (x2);
\draw[->] (x1) -- (y);
\draw[->] (x2) -- (y);
\draw[->] (x2) -- (m2);
\draw[->] (y) -- (m1);
\draw[->] (y) -- (m2);
\end{tikzpicture}

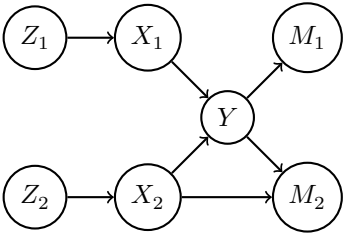
\captionof{figure}{\small A causal DAG for the \textds{toy} dataset. $Z_1$ and $Z_2$ are causal features that determine the true qualification $Y$, $X_1 = Z_1$, and $X_2$ is a noisy proxy for $Z_1$. We can directly observe $X_1$ and $X_2$ but not $Z_1$ or $Z_2$. $M_1$ and $M_2$ are non-causal features that correlate with $Y$ but do not influence it.}
\label{fig:causal-DAG-Z}
\end{minipage}
\hfill
\begin{minipage}[t]{0.64\textwidth}
\definecolor{best}{HTML}{BAFFCD}
\definecolor{issue}{HTML}{FFC8BA}
\definecolor{bad}{HTML}{FFC8BA}
\newcommand{\good}[1]{\cellcolor{best}#1} 
\newcommand{\bad}[1]{\cellcolor{issue}#1} 
\newcommand{\violation}[1]{\cellcolor{bad}#1} 
\newcommand{\basic}[0]{\textsf{ST}}
\newcommand{\dropfeatures}[0]{\textsf{DF}}
\newcommand{\manipulationproof}[0]{\textsf{MP}}
\newcommand{\lighttouch}[0]{\textsf{LT}}
\newcommand{\metrics}[0]{{\cell{l}{\textit{test error} \\ \textit{deployment error} \\ \textit{improvement rate}}}}

\normalsize 
\centering
\vspace{-1.2in}
\captionof{table}{\small Performance metrics for different specifications (\textbf{Spec.}) in which features may be misspecified. \basic~ denotes Static, \dropfeatures~ denotes DropFeatures, \manipulationproof~ denotes ManipulationProof, and \ourmethod~ denotes our method. For each method, we report \textit{test error}, \textit{deployment error}, and \textit{improvement rate}. In Full, the model designer has full knowledge of the causal DAG. In Mis. I, $M_1$ is mistaken for an improvable feature. In Mis. II, the improvable feature $X_1$ is miscategorized as manipulable.}
\label{table:toy_example_results}
\resizebox{0.8\linewidth}{!}{
\begin{tabular}{l l c c c c}
    \toprule
         & & \multicolumn{4}{c}{\textsc{Methods}} \\
        \cmidrule(lr){3-6}
        \textbf{Spec.} & \textbf{Metrics} & \basic & \dropfeatures & \manipulationproof & \ourmethod \\
    \midrule
        \cell{c}{Full} & \metrics & \cell{c}{$10.29$ \\ \bad{$35.79$} \\ \bad{$11.54$}} & \cell{c}{\bad{$28.0$} \\ $35.15$ \\ $13.13$} & \cell{c}{$11.91$ \\ $24.1$ \\ $14.63$} & \cell{c}{$10.19$ \\ \good{$20.61$} \\ \good{$23.49$}} \\
    \midrule
        \cell{c}{Mis. I} & \metrics & \cell{c}{11.39 \\ \bad{37.37} \\ 37.23} & \cell{c}{10.52 \\ \good{10.53} \\ \good{39.74}} & \cell{c}{ 11.26 \\ 19.79 \\ \bad{0.62}} & \cell{c}{11.04 \\ 25.30 \\ 23.04} \\
    \midrule
        \cell{c}{Mis. II} & \metrics & \cell{c}{\good{10.58} \\ \good{12.37} \\ \bad{1.12}} & \cell{c}{\bad{35.77} \\ \bad{41.51} \\ 5.74} & \cell{c}{29.52 \\ 27.68 \\ 3.36 } & \cell{c}{\good{10.80} \\ 23.58 \\ \good{19.82}} \\
    \bottomrule
\end{tabular}
}
\end{minipage}
\end{minipage}

\subsection{Setup}
\paragraph{Datasets} 

We consider five datasets: \textds{toy}, a synthetic dataset based on the causal DAG in \cref{fig:causal-DAG-Z}; \textds{credit}, a dataset for predicting whether an individual will default on an upcoming credit payment~\cite{yeh2009comparisons}; \textds{adult}, a census-based dataset for predicting adult annual incomes; \textds{german}, a dataset to assess credit risk in loans; and \textds{spambase}, a dataset for email spam detection. The last three are from the UCI ML Repository~\cite{dua2017uci}. We provide a detailed description of each dataset along with a partitioning of features in \cref{tab:dataset-info} in the Appendix. We assume the cost of manipulation is lower than that of improvement, and that there are no correlations within the two types of adaptation; specifically, we use cost matrices $\SI^{-1} = I$ and $\SM^{-1} = 0.2 I$.
%
In our context, all we require is the knowledge that $X_1,X_2$ are the factors that causally affect $Y$, rather than complete knowledge of the DAG.

\paragraph{Methods}  
\newcommand{\textmethodname}[1]{{\small\textsf{#1}}}
We fit linear classifiers for each dataset using the following methods:
\squishlist
    \item \textmethodname{Static}: a classifier trained using $\ell_2$-logistic regression without accounting for strategic adaptation.
    \item \textmethodname{DropFeatures}: a classifier trained using $\ell_2$-logistic regression without any manipulated features.
    \item \textmethodname{ManipulationProof}: a classifier that considers the agent's unconstrained best response during training, as typically done in the strategic classification literature \citep{hardt2016strategic}. 
    \item \textmethodname{OurMethod}: a linear logistic regression classifier that results from solving the optimization program in \cref{eqn: general ERM} using the BFGS algorithm~\cite{byrd1995limited}. This model represents our approach.
\squishend
%
\paragraph{Evaluation Criteria}
We run each method with 5-fold cross-validation and report the mean and standard deviation for each classifier on each of the following metrics:
%
\squishlist
    \item \emph{Test Error}: the error of a classifier after training but \emph{before} decision subjects' adaptations, i.e. $\Expectation_{(x,y) \sim \Dataset} \Indicator[h(x) \neq y]$.

    \item \emph{(Worst-Case) Deployment Error}: the test error of a classifier \emph{after} decision subjects play their manipulating best response, i.e. $\Expectation_{(x,y) \sim \Dataset} \Indicator[h(\SBM) \neq y]$.

    \item \emph{(Best-Case) Improvement Rate}: the percent of improvement, defined as the proportion of the population who originally would be rejected but are accepted if they perform constructive adaptation (improving best response), i.e. $\Expectation_{(x,y) \sim \Dataset} \Indicator[h(\SBI) = +1 ~|~ y(x) = -1]$.
\squishend

\subsection{Controlled experiments on synthetic dataset}

We perform controlled experiments using a synthetic \textds{toy} dataset to test the effectiveness of our model at incentivizing improvement in various situations. As shown in \cref{fig:causal-DAG-Z}, we set $Z_1$ and $Z_2$ as improvable features, $X_1$ and $X_2$ as their corresponding noisy proxies,  $M_1$ and $M_2$ as manipulable features, and $Y$ as the true outcome. Since we have full knowledge of this DAG structure, we can observe the changes in the true outcome after the decision subject's best response. As shown in \cref{table:toy_example_results}, Our method achieves the lowest deployment error ($20.11\%$) and improvement rate ($23.04\%$) when the model designer has full knowledge of the causal graph.

We also run experiments in which some features are \textit{misspecified}, simulating realistic scenarios in which the model designer may not be able to observe all the improvable features~\cite{haghtalab2020maximizing, shavit2020causal}, or mistakes one type of feature for another. We model these situations by changing $M_1$ into an improvable feature and $X_1$ into a manipulable feature; the results, shown in \cref{table:toy_example_results}, show that our classifier maintains a relatively high improvement rate in these cases, without sacrificing much deployment accuracy.

\begin{table}[!tb]
\definecolor{best}{HTML}{BAFFCD}
\definecolor{issue}{HTML}{FFC8BA}
\definecolor{bad}{HTML}{FFC8BA}
\newcommand{\good}[1]{\cellcolor{best}#1} 
\newcommand{\bad}[1]{\cellcolor{issue}#1} 
\newcommand{\violation}[1]{\cellcolor{bad}#1} 
\newcommand{\basic}[0]{\textsf{ST}}
\newcommand{\dropfeatures}[0]{\textsf{DF}}
\newcommand{\manipulationproof}[0]{\textsf{MP}}
\newcommand{\lighttouch}[0]{\textsf{LT}}
\newcommand{\metrics}[0]{{\cell{l}{\textit{test error} \\ \textit{deployment error} \\ \textit{improvement rate}}}}

\normalsize 
\caption{Performance metrics (mean $\pm$ standard deviation) for all methods on 4 data sets. \basic~ indicates Static, \dropfeatures~ indicates DropFeatures, \manipulationproof~ indicates ManipulationProof, and \ourmethod~ indicates our method.}
\label{table:results}
\centering\resizebox{0.8\linewidth}{!}{
\begin{tabular}{l l c c c c}
    \toprule
        & & \multicolumn{4}{c}{\textsc{Methods}} \\
        \cmidrule(lr){3-6}
        \textbf{Dataset} & \textbf{Metrics} & \basic & \dropfeatures & \manipulationproof & \ourmethod \\
    \midrule
        \sccell{c}{\textds{credit}} & \metrics & \cell{c}{$29.52\pm0.37$ \\ $34.69\pm3.23$ \\ $43.70\pm2.04$} & \cell{c}{$29.66\pm0.40$ \\ $29.66\pm0.40$ \\ $40.82\pm2.81$} & \cell{c}{$29.86\pm0.52$ \\ \bad{$36.85\pm1.59$} \\ \bad{$34.62\pm0.41$}} & \cell{c}{$29.60\pm0.44$ \\ \good{$29.41\pm0.39$} \\ \good{$55.50\pm4.03$}} \\
    \midrule
        \sccell{c}{\textds{adult}} & \metrics & \cell{c}{$23.05\pm0.47$ \\ \bad{$49.15\pm7.36$} \\ \bad{$26.04\pm2.93$}} & \cell{c}{$33.55\pm0.73$ \\ \bad{$33.55\pm0.73$} \\ \good{$61.68\pm19.12$}} & \cell{c}{$24.94\pm0.52$ \\ \good{$28.62\pm1.39$} \\ $31.93\pm4.13$} & \cell{c}{$27.22\pm0.65$ \\ \good{$28.98\pm0.68$} \\ $52.07\pm6.04$} \\
    \midrule
        \sccell{c}{\textds{german}} & \metrics & \cell{c}{$30.85\pm0.82$ \\ \bad{$39.30\pm4.74$} \\ $31.70\pm5.94$ } & \cell{c}{$36.10\pm1.97$ \\ \bad{$36.10\pm1.97$} \\ $34.00\pm9.87$} & \cell{c}{$33.25\pm1.44$ \\ $37.10\pm3.70$ \\ \bad{$29.10\pm2.85$} } & \cell{c}{$34.70\pm2.15$ \\ \good{$34.15\pm2.64$} \\ \good{$53.00\pm7.81$}} \\
    \midrule
        \sccell{c}{\textds{spambase}} & \metrics & \cell{c}{$7.11\pm0.52$ \\ \bad{$38.88\pm11.37$} \\ $27.50\pm11.24$ } & \cell{c}{$10.18\pm0.45$ \\ \good{$10.18\pm0.45$} \\ \bad{$16.88\pm11.33$}} & \cell{c}{$11.52\pm0.12$ \\ $16.07\pm2.12$ \\ $18.22\pm6.04$ } & \cell{c}{$14.37\pm0.24$ \\ $14.70\pm0.46$ \\ \good{$39.84\pm8.61$}} \\
    \bottomrule
\end{tabular}
}
\end{table}
\subsection{Results}
We summarize the performance of each method in \cref{table:results}. Here are some key takeaways:

\squishlist
\item Our method produces classifiers that achieve almost the highest deployment accuracy while providing the highest percentage of improvement across all four datasets.
\item The static classifier, which does not account for adaptations, is vulnerable to strategic manipulation and consequently has the highest deployment error on every dataset.
\item Naively cutting off the manipulated features may harm the accuracy at test time -- DropFeatures incurs high test errors on \textds{Adult} ($33.55\%$) and \textds{German} ($36.10\%$).
\item The strategic classifier ManipulationProof induces the lowest improvement rates on the \textds{Credit} ($25.26\%$) and \textds{German} ($29.10\%$) datasets.
\squishend

\subsection{Effect of trade-off parameter \texorpdfstring{$\lambda$}{}}
\cref{fig:trade-off} shows the performance of linear classifiers for different values of $\lambda$ on four real datasets. Note that, since the objective function is non-convex, the trends for test error at deployment are not necessarily monotonic.
In general, we observe a trade-off between the improvement rate and deployment error: both increase as $\lambda$ increases from $0.01$ to $10$ in all four datasets.

\begin{figure}[h!]
    \centering
    \begin{subfigure}[t]{0.24\textwidth}
        \includegraphics[width=\linewidth]{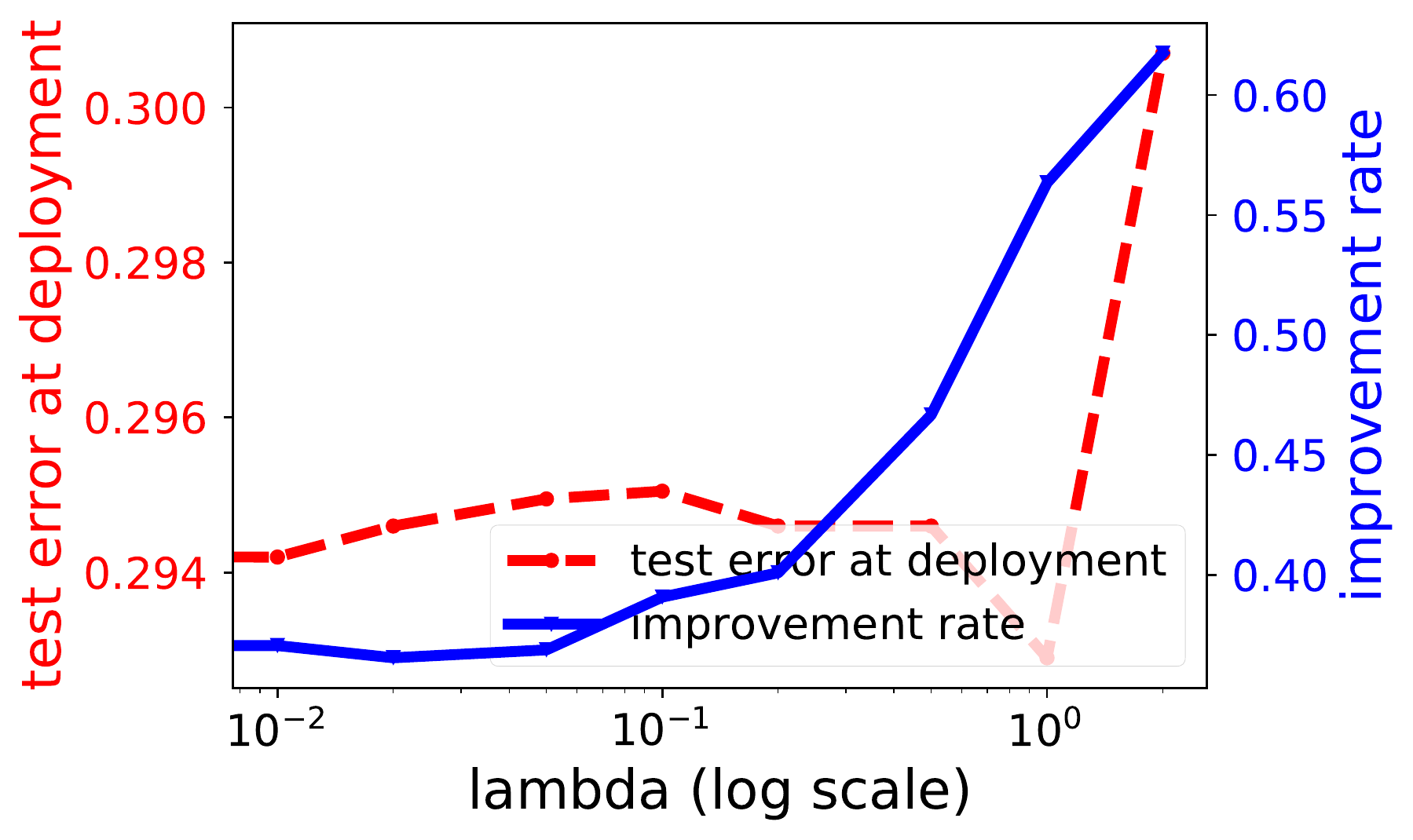}
        \caption{\textds{credit}}
    \end{subfigure}
    \begin{subfigure}[t]{0.24\textwidth}
        \includegraphics[width=\linewidth]{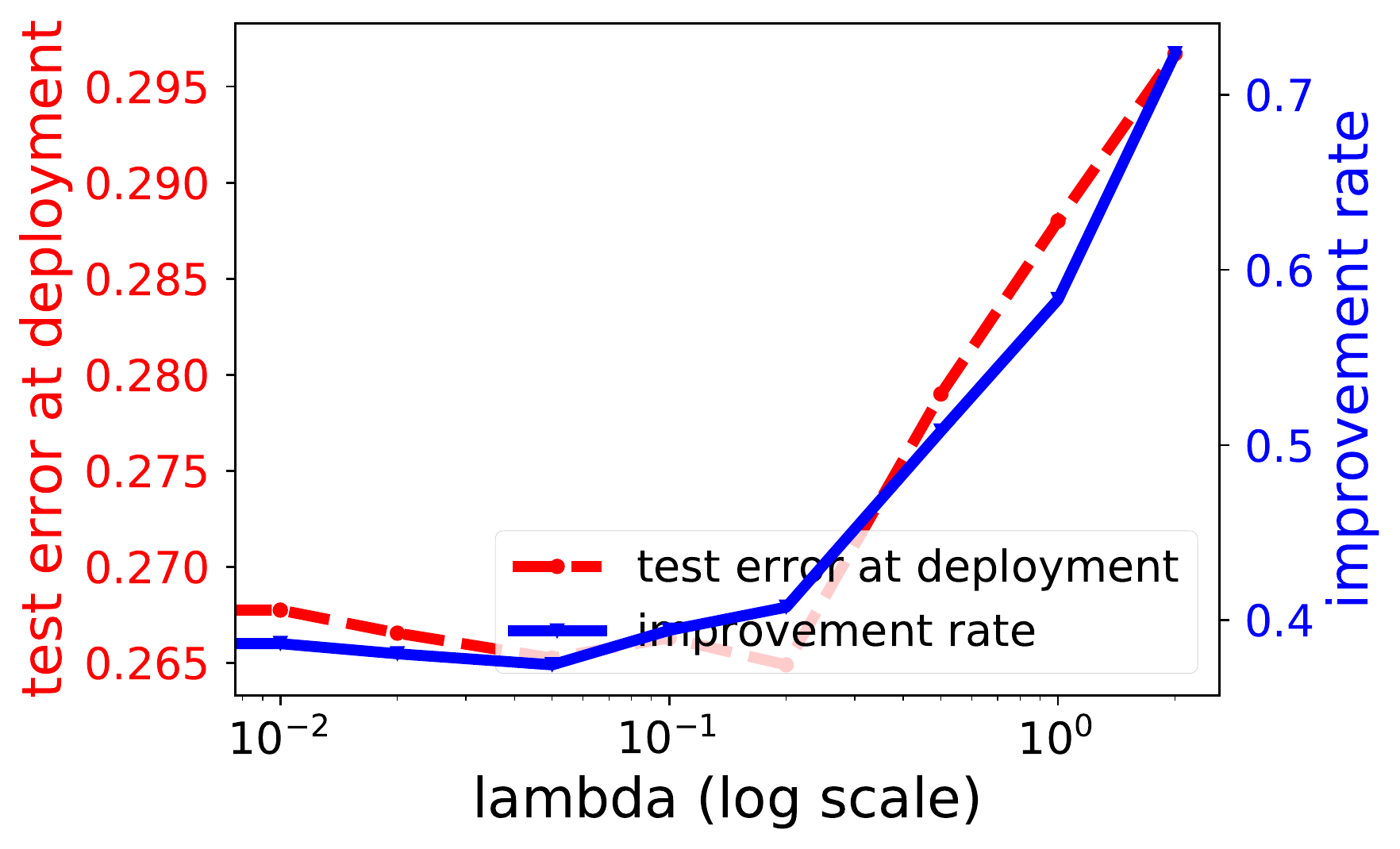}
        \caption{\textds{adult}}
    \end{subfigure}
    \begin{subfigure}[t]{0.24\textwidth}
        \includegraphics[width=\linewidth]{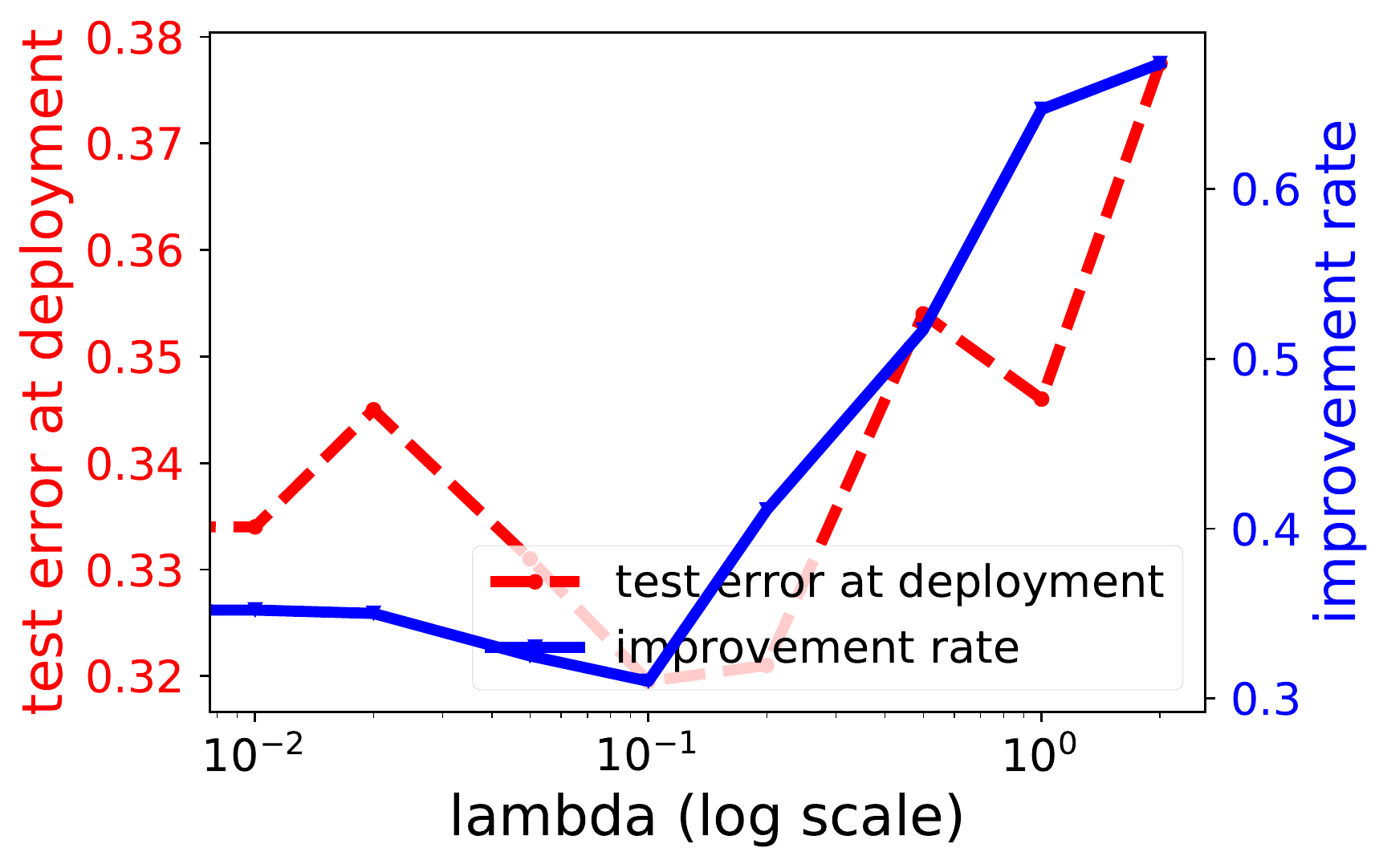}
        \caption{\textds{german}}
    \end{subfigure}
    \begin{subfigure}[t]{0.24\textwidth}
        \includegraphics[width=\linewidth]{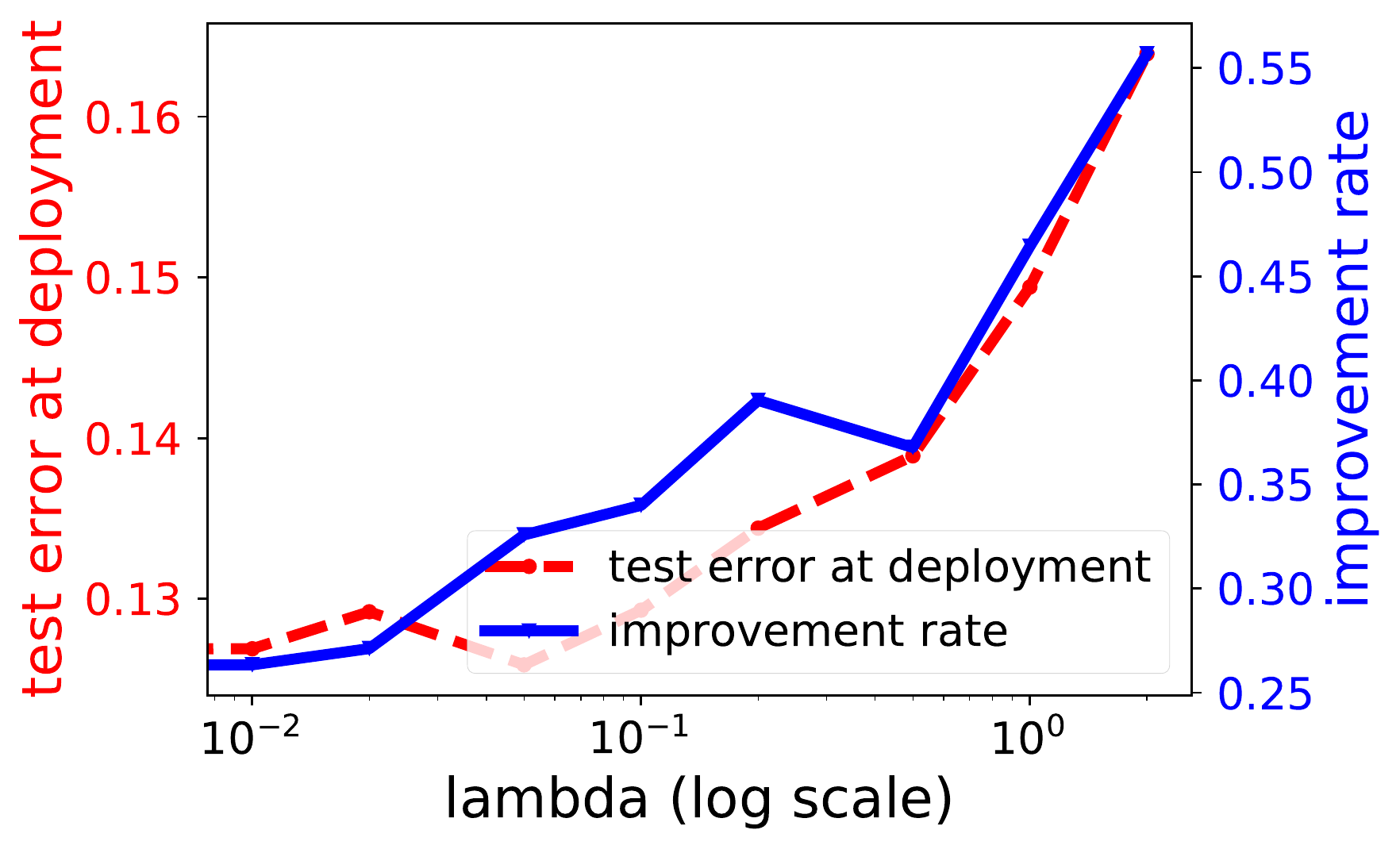}
        \caption{\textds{Spambase}}
    \end{subfigure}
    \caption{Trade-off between test error at deployment and improvement rate.}
    \label{fig:trade-off}
\end{figure}

\section{Conclusion remarks}
In this work, we study how to train a linear classifier that encourages constructive adaption. We characterize the equilibrium behavior of both the decision subjects and the model designer, and prove other formal statements about the possibilities and limits of constructive adaptation. Finally, our empirical evaluations demonstrate that classifiers trained via our method achieve favorable trade-offs between predictive accuracy and inducing constructive behavior.

Our work has several limitations:
\begin{enumerate}
    \item We assume the published classifier is linear; indeed, this is ultimately what allows for a closed-form best response (Theorem \ref{thm:best-response-close-form}) even with a relatively general cost function. However, this is clearly not true of many models actually in deployment.
    \item In order to focus on the \emph{strategic} aspects of constructive adaptation, we assume that the feature taxonomy is simply given; however, distinguishing improvable features from non-improvable features is an interesting question in its own right, and has been shown to be reducible to a nontrivial causal inference problem \cite{miller2020strategic}.
    \item Our formulation of the classification setting as a two-step process gives decision subjects only one chance to adapt their features. We suspect that extending this formalism to more rounds may create more opportunities for constructive behavior in the long term, especially for agents who cannot improve their true qualification in one round.
\end{enumerate}

\bibliographystyle{unsrt}
\bibliography{references}


\newpage
\appendix
\begin{center}
    \section*{\Large Appendix}
\end{center}
\section{Organization of the Appendix}
\label{sec:organization-appendix}
The Appendix is organized as follows.
\squishlist
\item Section \ref{sec:organization-appendix} provides the organization of the appendix.
\item Section \ref{sec:sec3-proof} provides the proof of \cref{thm:best-response-close-form}.
\item Section \ref{sec:discussion-proof} includes notations and proofs for the discussion in section \ref{sec:discussion}.
\item Section \ref{sec:sec4-proof} includes the proofs and derivations for section \ref{sec:method}.
\item Section \ref{sec:additional-related-work} presents additional related works.
\item Section \ref{sec:additional-experimental-result} shows additional experimental details and results, including basic information on each dataset, the computing infrastructure, and the flipsets.
\squishend

\section{Proof of Theorem \ref{thm:best-response-close-form}}
\label{sec:sec3-proof}
In this section, we provide the proof of \cref{thm:best-response-close-form}. To simplify our discussion, we focus on the unconstrained best response, i.e. the case in which $\F = \A$. The proofs for the other two types of best response ($\F = \M$, $\F = \I$) follow the same arguments.

We first prove two lemmas that allow us to reformulate the best response as an optimization problem. The first states that the decision subject's goal is to maximize their utility, but they are unwilling to pay a cost greater than 2:

\begin{lemma}[Decision Subject's Best-Response Function]
\label{lemma:best-response decision subject model}
Given a classifier $h: \mathcal X\rightarrow \{-1, +1\}$, a cost function $c: \mathcal X\times \mathcal X \rightarrow \RealNumber$, and a set of realizable feature vectors $\mathcal X^{\dag}\subseteq \mathcal X$, the \emph{best response} of a decision subject with features $x \in \mathcal{X}^\dag$ is the solution to the following optimization program:
\begin{align*}
\max_{x'\in \mathcal X^{\dag}}  \quad U(x, x')\quad \SuchThat  \quad c(x, x')\leq 2
\end{align*}
\end{lemma}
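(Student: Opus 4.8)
The plan is to exploit the fact that the prediction $h(x')$ is confined to $\{-1,+1\}$, so the largest conceivable gain in the first term of the utility $U(x,x') = h(x') - c(x,x')$ is the jump from $-1$ to $+1$, namely $2$. Since the cost term is nonnegative, a decision subject can never profit from paying a cost exceeding $2$. I will therefore show that appending the constraint $c(x,x') \le 2$ to the maximization is without loss of generality: it excludes no maximizer of the unconstrained utility, so the constrained and unconstrained programs have identical solution sets.

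First I would observe that staying put is always feasible, since $x \in \mathcal X^{\dag}$, and that it costs nothing because $c$ is a genuine norm, giving $c(x,x)=0$ and hence $U(x,x) = h(x) \ge -1$. Next, for any candidate $x'$ with $c(x,x') > 2$, I would bound the utility using $h(x') \le 1$, obtaining $U(x,x') = h(x') - c(x,x') < 1 - 2 = -1 \le U(x,x)$. Thus every such $x'$ is strictly dominated by leaving the features unchanged and can never be a maximizer. It follows that every maximizer of $U(x,\cdot)$ over $\mathcal X^{\dag}$ already satisfies $c(x,x') \le 2$, so restricting the feasible set to $\{x' : c(x,x') \le 2\}$ preserves both the optimal value and the full set of optimizers — which is exactly the claimed equivalence.

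The argument is elementary, and the only real content is identifying where the threshold $2$ comes from: the diameter of the range of the sign classifier. A couple of minor points deserve a remark rather than extra work. The conclusion holds uniformly in the sign of $h(x)$: when $h(x) = +1$, staying put already attains the global maximum utility $1$, so the constraint is trivially slack; when $h(x) = -1$, the displayed bound is what rules out overpaying. The boundary case $c(x,x') = 2$ merely produces a tie with staying put and so does not affect the existence of an optimizer inside the feasible region. I expect no substantive obstacle here; this lemma is a reduction that converts the unbounded utility maximization into a constrained program, setting up the geometric/Lagrangian analysis used to derive the closed form in \cref{thm:best-response-close-form}.
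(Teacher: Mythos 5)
Your proof is correct and follows essentially the same argument as the paper's: both rest on the observation that flipping a binary prediction gains at most $2$, costs are nonnegative, so any adaptation costing more than $2$ is dominated by staying put, making the constraint $c(x,x')\le 2$ lossless. If anything, your write-up is slightly more complete, since the paper only spells out the case $h(x)=-1$ while you also note explicitly that the constraint is trivially slack when $h(x)=+1$.
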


\begin{proof}
Since the classifier in our game outputs a binary decision ($-1$ or $+1$), decision subjects only have an incentive to change their features from $x$ to $x'$ when $c(x, x')\leq 2$. To see this, notice that an decision subject originally classified as $-1$ receives a default utility of $U(x,x) = f(x) - 0 = -1$ by presenting her original features $x$. Since costs are always non-negative, she can only hope to increase her utility by flipping the classifier's decision. If she changes her features to some $x'$ such that $f(x') = +1$, then the new utility will be given by
\begin{align*}
    U(x, x') = f(x') - c(x, x') = 1 - c(x, x')
\end{align*}
Hence the decision subject will only change her features if $1 - c(x, x') \geq f(x) = -1$, or $c(x,x') \leq 2$.
\end{proof}

The next lemma turns the above maximization program into a minimization program, in which the decision subject seeks the minimum-cost change in $x$ that crosses the decision boundary. If the cost exceeds 2, which is the maximum possible gain from adaptation, they would rather not modify any features.

\begin{lemma}
\label{lemma:reformulate best-response agent model}
Let $x^\star$ be an optimal solution to the following optimization problem:
\begin{align*}
    x^\star = &\argmin_{x'\in \mathcal X^*_\A(x)} \ c(x, x') \\
    \SuchThat\quad &\ \Sign(w^\T x') = 1
\end{align*}
If no solution is returned, we say an $x^\star$ such that $c(x,x^\star)=\infty$ is returned. Define $\Delta(x)$ as follows:
\begin{align*}
    \Delta(x) = 
    \begin{cases}
    x^\star,    & \text{if} \ \ c(x, x^\star)\leq 2 \\
    x,          & \text{otherwise}
    \end{cases}
\end{align*}
Then $\Delta(x)$ is an optimal solution to the optimization problem in \cref{lemma:best-response decision subject model}.
\end{lemma}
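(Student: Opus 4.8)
The plan is to reduce the constrained utility-maximization of \cref{lemma:best-response decision subject model} to the minimum-cost boundary-crossing problem by a direct case analysis on the sign of the prediction at a candidate point. Since the objective is $U(x,x') = h(x') - c(x,x')$ and $h$ takes only the values $\pm 1$, I would first partition the feasible region $\{x' \in \mathcal X^*_\A(x) : c(x,x') \leq 2\}$ into the set $P$ of points with $h(x') = +1$ and the set $N$ of points with $h(x') = -1$, and evaluate $U$ separately on each piece.

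On $N$ we have $U(x,x') = -1 - c(x,x') \leq -1$, with equality exactly when $c(x,x') = 0$; since $c$ is a genuine (Mahalanobis) norm of the actionable difference and $x'$ agrees with $x$ off the actionable coordinates, this forces $x' = x$, so the best utility over $N$ is $U(x,x) = -1$, attained by not adapting. On $P$ we have $U(x,x') = 1 - c(x,x')$, which is maximized by making $c(x,x')$ as small as possible; the unconstrained cost-minimizer over all positive-prediction points is precisely the $x^\star$ defined in the statement. Thus the entire problem collapses to comparing $U(x,x) = -1$ against $1 - c(x,x^\star)$.

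I would then split into the two branches that define $\Delta(x)$. If $c(x,x^\star) \leq 2$, then $x^\star$ is feasible for \cref{lemma:best-response decision subject model} and yields $U(x,x^\star) = 1 - c(x,x^\star) \geq -1 = U(x,x)$, so it ties or beats staying put while dominating every other point of $P$; hence $\Delta(x) = x^\star$ is optimal. If instead $c(x,x^\star) > 2$, then even the cheapest positive-prediction point exceeds the budget, so $P$ contains no feasible point, the feasible region reduces to $N$, and the optimum is $U(x,x) = -1$ attained at $x$, i.e. $\Delta(x) = x$. The degenerate case in which no positive-prediction point exists is absorbed by the convention $c(x,x^\star) = \infty > 2$, which correctly routes to $\Delta(x) = x$.

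I expect the only delicate point to be the treatment of the decision boundary and attainment: the constraint $\Sign(w^\T x') = 1$ describes a halfspace whose cost-infimum is approached on the hyperplane $w^\T x' = 0$, so I would either adopt the convention that the boundary counts as a positive prediction (so that $x^\star$ is attained) or argue via a limiting point. I would also note that at the knife-edge $c(x,x^\star) = 2$ both $x^\star$ and $x$ give utility $-1$, so returning $x^\star$ is still a valid optimal solution and the stated definition of $\Delta(x)$ is consistent. Everything else is a routine comparison of the two cost-linear branches of $U$.
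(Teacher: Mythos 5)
Your proposal is correct and follows essentially the same route as the paper's proof: partition candidate adaptations by the sign of the resulting prediction, observe that on the positive side maximizing $U$ is equivalent to minimizing cost while on the negative side staying at $x$ is best, and then compare the two branches according to whether $c(x,x^\star)\leq 2$. Your treatment of the knife-edge tie at $c(x,x^\star)=2$, the attainment of the minimizer on the boundary hyperplane, and the $c(x,x^\star)=\infty$ convention is more careful than the paper's terser argument, but it is the same underlying reduction.
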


\begin{proof}
Recall that the utility function of the decision subject is $U(x, x') = f(x') - c(x, x')$, and that, by Lemma \ref{lemma:best-response decision subject model}, they will only modify their features if the utility increases, i.e. if they achieve $f(x') = +1$ and while incurring cost $c(x, x') \leq 2$.

Consider two cases for $x'\neq x$:
\begin{enumerate}
    \item When $c(x, x') > 2$, there are no feasible points for the optimization problem of \cref{lemma:best-response decision subject model}. 
    \item When $c(x, x')\leq 2$, we only need to consider those feature vectors $x'$ that satisfy $f(x') =1$, because if $f(x') = -1$, the decision subject with features $x$ would prefer not to change anything. Since maximizing $U(x, x') = f(x') - c(x, x')$ is equivalent to minimizing $c(x, x')$ if $f(x') = 1$, we conclude that when $c(x, x')\leq 2$, the optimum of the program of \cref{lemma:best-response decision subject model} is the same as the optimum of the program in \cref{lemma:reformulate best-response agent model}.
\end{enumerate}
\end{proof}

\cref{lemma:reformulate best-response agent model} enables us to re-formulate the objective function as follows.
Recall that $c(x,x') = \sqrt{(\XA - \XA')^\T S^{-1}(\XA - \XA')}$ where $S^{-1}$ is symmetric positive definite. Thus $S^{-1}$ has the following diagonalized form, in which $Q$ is an orthogonal matrix and $\Lambda^{-1}$ is a diagonal matrix:
\begin{align*}
    S^{-1} = Q^\T\Lambda^{-1}Q = (\Lambda^{-\frac{1}{2}}Q)^\T(\Lambda^{-\frac{1}{2}}Q) 
\end{align*}
With this, we can re-write the cost function as
\begin{align*}
    c(x,x') &= \sqrt{(\XA - \XA')^\T S^{-1}(\XA - \XA')}\\
            &= \sqrt{(\XA - \XA')^\T
            (\Lambda^{-\frac{1}{2}}Q)^\T
            (\Lambda^{-\frac{1}{2}}Q) 
            (\XA - \XA')}\\
            &= \sqrt{(\Lambda^{-\frac{1}{2}}Q(\XA - \XA')) ^\T (\Lambda^{-\frac{1}{2}}Q(\XA - \XA')) }\\
            &= \|\Lambda^{-\frac{1}{2}}Q(\XA - \XA')\|_2
\end{align*}
Meanwhile, the constraint in \cref{lemma:reformulate best-response agent model} can be written
    \begin{align*}
            \Sign(w\cdot x') &= \Sign(\WA\cdot \XA' + \WIM \cdot \XIM ) \\
            &= \Sign(\WA\cdot \XA' - (- \WIM \cdot \XIM)) = 1
    \end{align*}
Hence the optimization problem can be reformulated as
\begin{align}
    \min_{\XA'\in \mathcal X^*_{A}}\ & \|(\Lambda^{-\frac{1}{2}}Q(\XA - \XA'))\|_2 \label{eq: reformulate optimization problem-main}\\
    \SuchThat\ &\ \Sign(\WA\cdot \XA' -(-  \WIM \cdot \XIM)) = 1 \label{eqn:sign constraint-main}
\end{align}

The above optimization problem can be further simplified by getting rid of the $\Sign(\cdot)$:
\begin{lemma}
\label{lemma:replace inequlity constraint with equality constraint}
If $\XA^\mp$ is an optimal solution to \cref{eq: reformulate optimization problem-main} under constraint \cref{eqn:sign constraint-main}, then it must satisfy $\WA\cdot \XA^\mp - (- \WIM\cdot \XIM) = 0$.
\end{lemma}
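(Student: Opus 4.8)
The plan is to show that the minimum-cost feasible point cannot lie strictly inside the half-space carved out by the constraint, but must sit exactly on its bounding hyperplane. I would argue by contradiction, exploiting the fact that the constraint function is affine while the objective is a genuine norm of an affine map, so that nudging the candidate solution back toward the (infeasible) original point shrinks the objective linearly while keeping the constraint satisfied.

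First I would fix notation. Write $g(\XA') := \WA \cdot \XA' - (-\WIM \cdot \XIM) = w^\T x'$ for the signed margin, so that the constraint \eqref{eqn:sign constraint-main} reads $g(\XA') \geq 0$ (adopting the convention $\Sign(t) = +1$ for $t \geq 0$, so the feasible region is the closed half-space $\{g \geq 0\}$ with boundary $\{g = 0\}$), and write $J(\XA') := \|\Lambda^{-\frac{1}{2}}Q(\XA - \XA')\|_2$ for the objective of \eqref{eq: reformulate optimization problem-main}. Two structural facts drive everything: (i) $g$ is affine in $\XA'$; and (ii) since $S^{-1}$ is positive definite, the matrix $\Lambda^{-\frac{1}{2}}Q$ is invertible, so $J$ is a bona fide norm composed with an affine map, whence $J(\XA') = 0$ if and only if $\XA' = \XA$, and $J > 0$ otherwise. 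Finally, because the decision subject starts from $h(x) = -1$, the original point is \emph{strictly} infeasible: $g(\XA) = w^\T x < 0$.

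The contradiction argument then proceeds as follows. Suppose the optimal solution $\XA^\mp$ satisfied $g(\XA^\mp) > 0$ strictly. Consider the segment $\XA(t) := (1-t)\XA^\mp + t\XA$ for $t \in [0,1]$. Affinity of $g$ gives $g(\XA(t)) = (1-t)\,g(\XA^\mp) + t\,g(\XA)$, which equals $g(\XA^\mp) > 0$ at $t = 0$, so by continuity $g(\XA(t)) > 0$, and hence $\XA(t)$ remains feasible, for all sufficiently small $t > 0$. At the same time $\XA - \XA(t) = (1-t)(\XA - \XA^\mp)$, so homogeneity of the norm yields $J(\XA(t)) = (1-t)\,J(\XA^\mp)$. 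Since $\XA$ is infeasible we have $\XA^\mp \neq \XA$, hence $J(\XA^\mp) > 0$, so $J(\XA(t)) < J(\XA^\mp)$ for any such $t > 0$. This is a feasible point of strictly lower cost, contradicting the optimality of $\XA^\mp$; therefore $g(\XA^\mp) = 0$, which is precisely the claimed equality.

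I do not anticipate a genuine obstacle; the only points needing care are bookkeeping ones. I would pin down the $\Sign$ convention at the boundary so that the feasible set is closed and its boundary is exactly $\{g = 0\}$, and I would explicitly invoke positive definiteness of $S^{-1}$, since that is what makes $J$ vanish only at $\XA$ and thereby guarantees both strict infeasibility of the starting point and the strict cost decrease in the scaling step. The identical segment-scaling argument transfers verbatim to the improving and manipulating responses by restricting $\XA'$ to vary only in the type-$\F$ coordinates, as that restriction preserves both the affinity of $g$ and the homogeneity of $J$.
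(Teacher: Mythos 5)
Your proof is correct and takes essentially the same route as the paper's: both argue by contradiction, forming a convex combination of the supposed interior optimum $\XA^\mp$ with the strictly infeasible original point $\XA$, and using homogeneity of the norm objective to exhibit a feasible point of strictly smaller cost. The only cosmetic difference is that the paper slides all the way to the boundary via an intermediate-value argument (relying on the $\Sign(0)=+1$ convention for feasibility of that boundary point), whereas you take an arbitrarily small step and stay strictly feasible, while also making explicit the strict-positivity facts the paper leaves implicit.
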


\begin{proof}
We prove by contradiction. Let $x^\mp_\A$ is an optimal solution to \cref{eq: reformulate optimization problem-main} and suppose towards contraction that $\WA x^\mp_\A > - \WIM\cdot \XIM$. Since the original feature vector $x$ was classified as $-1$, we have
\begin{align*}
    \WA \cdot x^\mp_\A &>  - \WIM\cdot \XIM,~~~\WA \cdot \XA <  -\WIM\cdot \XIM
\end{align*}
By the continuity properties of linear vector space, there exists $\mu\in (0,1)$ such that: 
\begin{align*}
    \WA \left(\mu\cdot \XA^{\mp} + (1-\mu) \XA\right) = -\WIM\cdot \XIM
\end{align*}
Let $\XA'' = \mu\cdot \XA^{\mp} + (1-\mu) \XA$. Then $\Sign(\WA \XA'' - ( - \WIM\cdot \XIM)) = 1$, i.e., $x''_\A$ also satisfies the constraint.
Since $\XA^{\mp}$ is an optimum of \cref{eq: reformulate optimization problem-main}, we have 
\begin{align*}
    \|\Sigma^{-\frac{1}{2}}Q (\XA^{\mp} - \XA)\|\leq \|\Sigma^{-\frac{1}{2}}Q (\XA'' - \XA)\|
\end{align*}
However, we also have: 
\begin{align*}
\|\Sigma^{-\frac{1}{2}}Q (\XA'' - \XA)\|
&=\|\Sigma^{-\frac{1}{2}}Q (\mu\cdot \XA^{\mp} + (1-\mu) \XA - \XA)\|\\
&=
    \|\Sigma^{-\frac{1}{2}}Q (\mu\cdot (\XA^{\mp}-\XA))\|\\
    &=\mu \|\Sigma^{-\frac{1}{2}}Q (\XA^\mp - \XA)\|\\
    & < \|\Sigma^{-\frac{1}{2}}Q (\XA^\mp - \XA)\|
\end{align*}
contradicting our assumption that $x^\mp_\A$ is optimal. Therefore $x^\mp_\A$ must satisfy $\WA x^\mp_\A = -\WIM\cdot \XIM$.
\end{proof}

As a result of Lemma \ref{lemma:replace inequlity constraint with equality constraint}, we can replace the constraint in \cref{eq: reformulate optimization problem-main} with its corresponding equality constraint without changing the optimal solution.\footnote{A similar argument was made by \cite{haghtalab2020maximizing} but here we provide a proof for a more general case, where the objective function is to minimize a weighted norm instead of simply $\|\XA - \XA'\|_2$.} The decision subject's best-response program from \cref{lemma:best-response decision subject model} is therefore equivalent to
\begin{align}
    \min_{\XA'\in \mathcal X^*_{A}}\ & \|(\Lambda^{-\frac{1}{2}}Q(\XA - \XA'))\|_2 \label{OPT:1} \\
    \SuchThat\ &\ \WA\cdot \XA' -( -  \WIM \cdot \XIM) = 0 \label{OPT:2} 
\end{align}

The following lemma gives us a closed-form solution for the above optimization problem:
\begin{lemma}
\label{lemma:norm minimization with equality constraints}
The optimal solution to the optimization problem defined in \cref{OPT:1} and \cref{OPT:2}

has the following closed form:
$$
\XA^\mp = \XA - \frac{w^{\T}x}{\WA^\T S \WA}S \WA.
$$
\end{lemma}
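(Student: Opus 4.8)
The plan is to recognize this as a convex quadratic program with a single affine equality constraint and solve it by Lagrange multipliers. First I would observe that minimizing the weighted norm $\|\Lambda^{-\frac{1}{2}}Q(\XA - \XA')\|_2$ is equivalent to minimizing its square, which by the factorization $S^{-1} = Q^\T\Lambda^{-1}Q$ equals $(\XA' - \XA)^\T S^{-1}(\XA' - \XA)$. Since $S^{-1}$ is positive definite, this objective is strictly convex in $\XA'$, while the constraint \cref{OPT:2} is affine; hence any stationary point of the associated Lagrangian is automatically the unique global minimizer, and it suffices to exhibit one.

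Next I would introduce the change of variables $\delta = \XA' - \XA$ and collapse the constraint to a scalar equation. Using $w^\T x = \WA^\T\XA + \WIM^\T\XIM$ together with \cref{OPT:2}, the requirement $\WA\cdot\XA' = -\WIM\cdot\XIM$ becomes simply $\WA^\T\delta = -(w^\T x)$. The problem is therefore to minimize $\delta^\T S^{-1}\delta$ subject to the single linear equation $\WA^\T\delta = -(w^\T x)$, a standard minimum-weighted-norm projection onto a hyperplane.

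Then I would form the Lagrangian $L(\delta,\nu) = \delta^\T S^{-1}\delta + \nu\,(\WA^\T\delta + w^\T x)$ and impose stationarity $\nabla_\delta L = 2S^{-1}\delta + \nu\WA = 0$, which gives $\delta = -\tfrac{\nu}{2}S\WA$. Substituting this into the constraint yields $-\tfrac{\nu}{2}\WA^\T S\WA = -(w^\T x)$, so $\nu = \tfrac{2\,w^\T x}{\WA^\T S\WA}$ and hence $\delta = -\tfrac{w^\T x}{\WA^\T S\WA}S\WA$. Unwinding $\XA^\mp = \XA + \delta$ produces exactly $\XA^\mp = \XA - \tfrac{w^\T x}{\WA^\T S\WA}S\WA$, matching the claimed closed form.

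I do not expect a genuine obstacle here; the only points requiring care are bookkeeping. First, the denominator $\WA^\T S\WA$ must be strictly positive, which follows from the positive definiteness of $S$ as long as $\WA \neq \mathbf 0$ (if $\WA = \mathbf 0$ the actionable features are irrelevant to the decision and no adaptation can cross the boundary, so the best response is to stay put). Second, one must confirm that the single Lagrangian stationary point is the global optimum rather than merely a critical point, which is immediate from the strict convexity noted above. The substantive step is thus the reduction of the geometric constraint \cref{OPT:2} to the scalar equation $\WA^\T\delta = -(w^\T x)$; everything after that is routine.
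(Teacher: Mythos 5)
Your proof is correct, and it takes a genuinely different route from the paper's. The paper treats the problem as an instance of the generic least-norm problem with equality constraints, setting $A = \Lambda^{-\frac{1}{2}}Q$, $b = \Lambda^{-\frac{1}{2}}Q\XA$, $C = \WA^{\T}$, $d = -\WIM^{\T}\XIM$ and substituting into the textbook closed-form solution $(A^\T A)^{-1}\left(A^\T b - C^\T(C(A^\T A)^{-1}C^\T)^{-1}(C(A^\T A)^{-1}A^\T b - d)\right)$ cited from \cite{boyd2004convex}, then simplifying via the identities $A^\T A = S^{-1}$ and $A^\T b = S^{-1}\XA$. You instead derive the solution from scratch: the change of variables $\delta = \XA' - \XA$ collapses the constraint in \cref{OPT:2} to the scalar equation $\WA^\T\delta = -(w^\T x)$, after which a single Lagrange multiplier and the strict convexity of $\delta \mapsto \delta^\T S^{-1}\delta$ finish the job. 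Your version is more self-contained (no external formula, no matrix-inverse bookkeeping --- the quantity $\WA^\T S \WA$ that appears as a matrix inverse $(C(A^\T A)^{-1}C^\T)^{-1}$ in the paper is manifestly a scalar in your derivation), and it makes explicit two points the paper leaves implicit: why the stationary point is the global optimum, and why the denominator $\WA^\T S\WA$ is nonzero when $\WA \neq \mathbf{0}$. The paper's approach buys brevity by outsourcing the optimization to a cited result; yours buys transparency at the cost of a few extra lines. Both yield identical conclusions.
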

\begin{proof}
Notice that the above program has the form
    \begin{align*}
        \min_{\XA'\in \mathcal \XA^*}\ &\|A\XA'-b\|_2 \\
        \SuchThat\ & \  C\XA' = d
    \end{align*}
where $A = \Lambda^{-\frac{1}{2}}Q$, $b = \Lambda^{-\frac{1}{2}}Q \XA$, $C = \WA^{\T}$, and $d =  - \WIM^{\T} \XIM$. Note the following useful equalities:
    \begin{align*}
        A^\T A &= (\Lambda^{-\frac{1}{2}}Q)^\T\Lambda^{-\frac{1}{2}}Q = S^{-1}\\
        (A^\T A)^{-1} &= S\\
        A^\T b &= (\Lambda^{-\frac{1}{2}}Q)^\T \Lambda^{-\frac{1}{2}}Q\XA = S^{-1}\XA
    \end{align*}
The above is a norm minimization problem with equality constraints, whose optimum $\XA^\mp$ has the following closed form \cite{boyd2004convex}:
    \begin{align*}
        \XA^\mp &= (A^\T A)^{-1}
        \left( A^\T b - C^\T(C(A^\T A)^{-1}C^\T)^{-1}(C(A^\T A)^{-1}A^\T b - d)  \right)\\
        &= S \left( S^{-1}\XA -  \WA ( \WA^\T S \WA )^{-1} (\WA^\T S (S^{-1}\XA) - (- \WIM^{\T} \XIM) )\right)\\
        & = \XA - S\left(
        \WA(\WA^{\T} S \WA)^{-1}(\WA^{\T} \XA + \WIM^{\T} \XIM )
        \right)\\
        &= \XA - \frac{w^{\T}x }{\WA^{\T} S \WA}S \WA
    \end{align*}
\end{proof}
We can now compute the cost incurred by an individual with features $x$ who plays their best response $x^\mp$:
    \begin{align*}
        c(x,x\mp) &= \sqrt{(\XA - \XA^\mp)^{\T} S^{-1} (\XA-\XA^\mp)}\\
        &= \sqrt{\left(\frac{w^{\T}x}{\WA^\T S \WA}S \WA\right)^{\T} 
        S^{-1} 
        \left(\frac{w^\T x}{\WA^{\T} S \WA}S \WA\right)}\\
        &=\frac{|w^{\T}x|}{\sqrt{\WA^{\T} S \WA}}
    \end{align*}
Hence an decision subject who was classified as $-1$ with feature vector $x$ has the unconstrained best response
\begin{align*}
    \Delta(x) =
    \begin{cases}
    x, & \text{if } \frac{|w^{\T}x|}{\sqrt{\WA^{\T} S \WA}}\geq 2\\
    \left[\XA - \frac{w^{\T}x}{\WA^{\T} S\WA}S\WA ~|~ \XIM\right], & \text{otherwise}
    \end{cases}
\end{align*}
which completes the proof of \cref{thm:best-response-close-form}.

\section{Proofs of Propositions in Section \ref{sec:discussion}}
\label{sec:discussion-proof}
\paragraph{Notation}
We make use of the following additional notation:
\begin{itemize}
    \item $v^{(i)}$ denotes the $i$-th element of a vector $v$
    \item For any $\textsf{F} \in \{\A, \I, \M\}$, $\Delta^\textsf{F} \in \RealNumber^{d_\textsf{F}}$ denotes the vector containing only features of type \textsf{F} within the best response $\Delta(x)$.
    \item \textbf{0} denotes the vector whose elements are all 0
    \item $A \succ B$ indicates that matrix $A - B$ is positive definite
    \item $e_i$ denotes the vector containing $1$ in its $i$-th component and $0$ elsewhere
\end{itemize}

\subsection{Proof of Proposition \ref{prop:1}}

\begin{proof}
Let $w_\M^{(m)} \neq 0$, and consider an decision subject with original features $x$ who was classified as $-1$. By Theorem \ref{thm:best-response-close-form}, the actionable sub-vector of $x$'s unconstrained best response is
\begin{align*}
    \Delta^\A(x)
    = \frac{w^\T x}{\WA^\T S \WA}S\cdot \WA
    = \frac{w^\T x}{\WA^\T S \WA}
        \begin{bmatrix}
            \SI & 0\\
            0 & \SM
        \end{bmatrix}
        \begin{bmatrix}
        \WI\\
        \WM
        \end{bmatrix}
    = \frac{w^{\T}x}{\WA^{\T} S \WA}
        \begin{bmatrix}
            \SI\cdot \WI\\
            \SM\cdot \WM
        \end{bmatrix}
\end{align*}
And in particular,
\begin{align*}
    \Delta^\M(x)
    = \frac{w^{\T}x}{\WA^{\T} S \WA} \SM \cdot \WM
\end{align*}

Since $x$ was initially classified as $-1$, we have $w^{\T}x < 0$, which means $\frac{w^{\T}x}{\WA S \WA}\neq 0$. For convenience, let $c = \frac{w^{\T}x}{\WA S \WA}$. We have
\begin{align*}
    \Delta^\M(x) - \XM
    = c\SM\WM - \XM
    = \SM(c\WM - \SM^{-1} \XM)
\end{align*}
Now examine the following:
\begin{align*}
    (c\WM - \SM^{-1} \XM)^{(m)}
    &= cw_\M^{(m)} - (S_\M^{-1} \XM)^{(m)} \\
    &= cw_\M^{(m)} - \sum_{i=1}^{d_\M} (S_\M^{-1})^{(im)} \XM^{(m)}
\end{align*}
Recall that $c w_\M^{(m)} \neq 0$. Hence if $\sum_{i=1}^{d_\M} (S_\M^{-1})^{(im)} = 0$, or if
\[ x_\M^{(m)} \neq \frac{c w_\M^{(m)}}{\sum_{i=1}^{d_\M} (S_\M^{-1})^{(im)}} ,\]
then $(c\WM - \SM^{-1} \XM)^{(m)} \neq 0$, and therefore $c\WM - S_\M^{-1} \XM \neq \mathbf{0}$. Since $\SM$ is positive definite, it has full rank, which implies
\[ \Delta^\M(x) - \XM = \SM(c\WM - S_\M^{-1} \XM) \neq \mathbf{0} \]
as required. With this, we have shown that when there exists a manipulated feature $x^{(m)}$ whose corresponding coefficient $\WA^{(m)}\neq 0$, the classifier is vulnerable to changes in the manipulated features by the vast majority of decision subjects.
\end{proof}

\subsection{Proof of Proposition \ref{prop:2}}
\begin{proof}
Consider a decision subject with features $x$ such that $h(x) = -1$. Suppose $x$ can flip this classification result by performing the improving best response $\BI(x)$, which implies that the cost of that action is no greater than $2$ for this decision subject. We therefore have:
\begin{align*}
    2
    \geq c(x, \BI(x))
    = \frac{|w^{\T} x|}{\sqrt{\WI^\T\SI\WI}}
    > \frac{|w^{\T} x |}{\sqrt{\WI^\T\SI\WI + \WM^\T\SM\WM}}
    = \frac{|w^{\T} x|}{\sqrt{\WA^{\T}S\WA}}
    = c(x, \Delta(x))
\end{align*}
where the strict inequality is due to the fact that $\SM \succ 0$ and $\WM \neq \textbf{0}$. As we have shown that $c(x, \Delta(x)) < 2$, we conclude whenever an decision subject can successfully flip her decision by the improving best response, she can also achieve it by performing the unconstrained best response.

On the other hand, consider the case when the unconstrained best response of a decision subject with features $x^*$ has cost exactly 2:
\begin{align*}
    2
    = c(x^*, \Delta(x^*))
    = \frac{|w^{\T} x^* |}{\sqrt{\WA^{\T} S \WA}}
    = \frac{|w^{\T} x^*|}{\sqrt{\WI^{\T} \SI\WI + \WM^{\T}\SM\WM}}
    < \frac{|w^{\T} x^*|}{\sqrt{\WI^\T\SI\WI}}
    = c(x^*, \BI(x^*))
\end{align*}
where the strict inequality is due to the fact that $\SM \succ 0$ and $\WM \neq \textbf{0}$. As we have shown that $c(x^*, \BI(x^*)) > 2$, we conclude that while the unconstrained best response is viable for this decision subject, the improving best response is not.
\end{proof}

\subsection{Proof of Proposition \ref{prop:3}}

\begin{proof}
Consider any cost matrix $S^{-1} \in \RealNumber^{\DA \times \DA}$ and any nontrivial classifier $h$ (i.e. $h$ does not assign every $x$ the same prediction). Since $S^{-1}$ is positive definite, so is its inverse $S$, and all of their diagonal entries are positive. And since $h$ is nontrivial, it must contain a nonzero coefficient $w_i \neq 0$. Additionally, let $w_j$ be any other coefficient.

Let $\tilde{S}^{-1} = S^{-1} + \tau (e_i e_j^\T + e_j e_i^\T)$ for some constant $\tau \in \RealNumber$ to be set later. We claim that there exists $\tau$ such that the best-response adaptation always costs less under $\tilde{S}^{-1}$ than $S^{-1}$. To do so, we compute the inverse of $\tilde{S}^{-1}$ and invoke the closed-form cost expression given by Theorem \ref{thm:best-response-close-form}.

To begin computing the inverse, note that by the Sherman-Morrison-Woodbury formula \cite{golub2013matrix},
\begin{align}
    \tilde{S} = \left(\tilde{S}^{-1}\right)^{-1}
    &= S - \tau S
        \begin{bmatrix}
        e_i & e_j
        \end{bmatrix} 
        \left(I + \tau\begin{bmatrix}
        e_j^\T \\
        e_i^\T
        \end{bmatrix} 
        S
        \begin{bmatrix}
        e_i & e_j
        \end{bmatrix}
        \right)^{-1}
        \begin{bmatrix}
        e_j^\T \\
        e_i^\T
        \end{bmatrix} 
        S\\
    &= S - \tau S
        \begin{bmatrix}
        e_i &  e_j
        \end{bmatrix} 
        \left(I +\tau
        \begin{bmatrix}
        S_{ij} & S_{jj} \\
        S_{ii} & S_{ij}
        \end{bmatrix}
        \right)^{-1}
        \begin{bmatrix}
        e_j^\T \\
        e_i^\T
        \end{bmatrix} S\\
    &= S - \tau S
        \begin{bmatrix}
        e_i &  e_j
        \end{bmatrix} 
        \left[
            \tau \left(\frac{1}{\tau} I +
            \begin{bmatrix}
            S_{ij} & S_{jj} \\
            S_{ii} & S_{ij}
            \end{bmatrix}
            \right)
        \right]^{-1}
        \begin{bmatrix}
        e_j^\T \\
        e_i^\T
        \end{bmatrix} S\\
    &= S - \tau S
        \begin{bmatrix}
        e_i &  e_j
        \end{bmatrix}
        \tau^{-1}
        \begin{bmatrix}
        \frac{1}{\tau} + S_{ij} & S_{jj} \\
        S_{ii} & \frac{1}{\tau} + S_{ij}
        \end{bmatrix}^{-1}
        \begin{bmatrix}
        e_j^\T \\
        e_i^\T
        \end{bmatrix} S\\
    &= S - S
        \begin{bmatrix}
        e_i &  e_j
        \end{bmatrix}
        {\underbrace{
            \begin{bmatrix}
            \frac{1}{\tau} + S_{ij} & S_{jj} \\
            S_{ii} & \frac{1}{\tau} + S_{ij}
            \end{bmatrix}
        }_{T}}^{-1}
        \begin{bmatrix}
        e_j^\T \\
        e_i^\T
        \end{bmatrix} S
        \label{equation:tildeS-in-terms-of-T}
\end{align}
Clearly, we can ensure that $T$ is invertible by setting $\tau$ so that $\det(T) \neq 0$. But as the following lemmas show, we can actually say much more: $\det(T)$ can be made either positive or negative, and moreover, both can be accomplished with a choice of $\tau > 0$ or $\tau < 0$. This flexibility in choosing $\tau$ will become crucial later.

First, we need the following useful fact about positive definite matrices:

\begin{lemma}[Off-diagonal entries of a positive definite matrix]
\label{lemma:off-diagonal-entries-of-a-positive-definite-matrix}
If $A \in \RealNumber^{n \times n}$ is symmetric positive definite, then for all $i, j \in [n]$, $\sqrt{A_{ii} A_{jj}} > |A_{ij}|$.
\end{lemma}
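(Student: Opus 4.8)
The plan is to prove the contrapositive-free, direct statement by reducing to a two-dimensional quadratic form. First I would dispose of the trivial case: when $i = j$ we have $\sqrt{A_{ii}A_{jj}} = A_{ii} = |A_{ij}|$, so equality holds and the strict inequality is genuinely a statement about distinct indices $i \neq j$; I would note this at the outset and assume $i \neq j$ for the remainder.

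The core idea is to test positive definiteness against vectors supported only on coordinates $i$ and $j$. For scalars $\alpha, \beta \in \RealNumber$ not both zero, set $v = \alpha e_i + \beta e_j$, which is a nonzero vector, so positive definiteness gives $v^\T A v > 0$. Expanding using the symmetry $A_{ij} = A_{ji}$ yields the quadratic form
\begin{align*}
    v^\T A v = \alpha^2 A_{ii} + 2\alpha\beta\, A_{ij} + \beta^2 A_{jj} > 0 .
\end{align*}
Since this inequality is strict for \emph{every} nonzero $(\alpha,\beta)$, the binary quadratic form on the left is positive definite as a form in $(\alpha,\beta)$.

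A positive definite binary quadratic form $A_{ii}\alpha^2 + 2A_{ij}\alpha\beta + A_{jj}\beta^2$ has strictly negative discriminant, i.e. $(2A_{ij})^2 - 4 A_{ii} A_{jj} < 0$, which rearranges to $A_{ij}^2 < A_{ii}A_{jj}$. (Equivalently, one may observe that the $2\times 2$ principal submatrix on rows and columns $i,j$ is itself positive definite, hence has positive determinant $A_{ii}A_{jj} - A_{ij}^2 > 0$.) Taking $v = e_i$ shows $A_{ii} > 0$, and likewise $A_{jj} > 0$, so both sides of $A_{ii}A_{jj} > A_{ij}^2 \geq 0$ are nonnegative and taking square roots preserves the inequality, giving $\sqrt{A_{ii}A_{jj}} > |A_{ij}|$ as desired. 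I do not anticipate any serious obstacle here; the only subtlety worth flagging is that strictness of the inequality hinges on strict positive definiteness (a merely positive \emph{semi}definite $A$ would allow $\sqrt{A_{ii}A_{jj}} = |A_{ij}|$), so the argument must invoke $v^\T A v > 0$ rather than $\geq 0$.
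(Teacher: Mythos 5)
Your proof is correct and takes essentially the same approach as the paper's: both restrict the quadratic form to vectors $\alpha e_i + \beta e_j$ and deduce $A_{ii}A_{jj} - A_{ij}^2 > 0$, the paper via the explicit choice $\alpha = -A_{ij}$, $\beta = A_{ii}$, and you via the equivalent observation that the resulting binary form (i.e., the $2 \times 2$ principal submatrix) is positive definite and hence has positive determinant. Your preliminary remark that the claim genuinely concerns $i \neq j$ is a valid refinement, since as stated the strict inequality fails trivially when $i = j$, a case the paper's proof silently excludes.
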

\begin{proof}
By positive definiteness, we have, for any nonzero $\alpha, \beta \in \RealNumber$,
\begin{align*}
    (\alpha e_i + \beta e_j)^\T A (\alpha e_i + \beta e_j)
    = \alpha^2 A_{ii} + \beta^2 A_{jj} + 2\alpha \beta A_{ij}
    > 0
\end{align*}
For a choice of $\alpha = -A_{ij}$ and $\beta = A_{ii}$, we have
\begin{align*}
    A_{ij}^2 A_{ii} + A_{ii}^2 A_{jj} - 2 A_{ij}^2 A_{ii}
    = A_{ii} (A_{ii} A_{jj} - A_{ij}^2)
    > 0
\end{align*}
Since $A_{ii} > 0$, we must have $A_{ii} A_{jj} - A_{ij}^2 > 0$, from which the claim follows.
\end{proof}

Now we can characterize the possible settings of $\tau$ and $\det(T)$:

\begin{lemma}[Possible settings of $\tau$]
\label{lemma:possible-settings-of-tau}
There exist $\tau_{\mathrm{max}}, \tau_{\mathrm{min}} > 0$ such that the following hold:
\begin{enumerate}
    \item $\det(T) > 0$ for any $\tau \in \RealNumber$ such that $\tau_{\mathrm{max}} \geq |\tau| > 0$.
    \item $\det(T) < 0$ for any $\tau \in \RealNumber$ such that $\tau_{\mathrm{min}} \leq |\tau|$.
\end{enumerate}
\end{lemma}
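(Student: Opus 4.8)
The plan is to reduce the entire statement to a sign analysis of the scalar $\det(T)$. First I would compute the determinant of the $2\times 2$ matrix $T$ directly:
\[
\det(T) = \left(\tfrac{1}{\tau} + S_{ij}\right)^2 - S_{ii}S_{jj}.
\]
To streamline the algebra I would substitute $u = 1/\tau$ and set $r = \sqrt{S_{ii}S_{jj}} > 0$, so that $\det(T) = (u + S_{ij})^2 - r^2$. The map $u = 1/\tau$ is a strictly decreasing bijection from $(0,\infty)$ onto $(0,\infty)$ and from $(-\infty,0)$ onto $(-\infty,0)$; in particular $|u| = 1/|\tau|$ is large exactly when $|\tau|$ is small, and this is the dictionary that converts bounds on $\tau$ into bounds on $u$.

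The key structural input is Lemma \ref{lemma:off-diagonal-entries-of-a-positive-definite-matrix}, applied to the positive definite matrix $S$ (which is positive definite because its inverse $S^{-1}$ is): it yields $r > |S_{ij}|$, hence both $r - S_{ij} > 0$ and $r + S_{ij} > 0$. With this in hand the sign of $\det(T)$ is governed entirely by comparing $|u + S_{ij}|$ to $r$: we have $\det(T) > 0 \iff |u + S_{ij}| > r$ and $\det(T) < 0 \iff |u + S_{ij}| < r$.

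For claim 1, I would observe that driving $|\tau|$ toward $0$ makes $|u|$, and hence $|u + S_{ij}|$, arbitrarily large, forcing $\det(T) > 0$. Concretely, for small $\tau > 0$ we get $u = 1/\tau > r - S_{ij}$, while for $\tau < 0$ with $|\tau|$ small we get $u = 1/\tau < -(r + S_{ij})$; either way $|u + S_{ij}| > r$. Choosing $\tau_{\mathrm{max}}$ strictly below $\min\{1/(r - S_{ij}),\, 1/(r + S_{ij})\}$ secures this for all $0 < |\tau| \le \tau_{\mathrm{max}}$, and both signs of $\tau$ work. For claim 2 I would push $|\tau|$ large, so that $u \to 0$ and $u + S_{ij} \to S_{ij}$, whose magnitude lies strictly below $r$. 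Since the interval $(-r - S_{ij},\, r - S_{ij})$ contains the symmetric neighborhood $(-\delta,\delta)$ of $0$ with $\delta = r - |S_{ij}| > 0$, imposing $|u| < \delta$ forces $\det(T) < 0$; this holds whenever $|\tau| \ge \tau_{\mathrm{min}}$ for any $\tau_{\mathrm{min}}$ chosen strictly larger than $1/\delta$, again for either sign of $\tau$.

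I do not expect a genuine conceptual obstacle here; the single inequality $r > |S_{ij}|$ from the preceding lemma does all the real work. The only care needed is bookkeeping: one must track the sign of $\tau$ when inverting to $u = 1/\tau$, and pick $\tau_{\mathrm{max}}$ and $\tau_{\mathrm{min}}$ with a little slack so the inequalities are \emph{strict}, since the borderline case $|u + S_{ij}| = r$ gives $\det(T) = 0$, which satisfies neither claim.
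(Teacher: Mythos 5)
Your proof is correct and follows essentially the same route as the paper's: compute the $2\times 2$ determinant, reduce claim 1 and claim 2 to comparing $\left|\tfrac{1}{\tau} + S_{ij}\right|$ against $\sqrt{S_{ii}S_{jj}}$, and invoke Lemma \ref{lemma:off-diagonal-entries-of-a-positive-definite-matrix} to guarantee $\sqrt{S_{ii}S_{jj}} - |S_{ij}| > 0$ for the large-$|\tau|$ regime. Your substitution $u = 1/\tau$ and the sign-split threshold $\min\{1/(r - S_{ij}),\, 1/(r + S_{ij})\}$ are cosmetic repackagings — that minimum equals the paper's bound $\left(\sqrt{S_{ii}S_{jj}} + |S_{ij}|\right)^{-1}$ — so the two arguments coincide.
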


\begin{proof}
To prove the first claim, note that having
\begin{align*}
    \det(T)
    = \left(\frac{1}{\tau} + S_{ij}\right)^2 - S_{ii} S_{jj}
    > 0
\end{align*}
is equivalent to
\begin{align*}
    \left| \frac{1}{\tau} + S_{ij} \right|
    > \sqrt{S_{ii} S_{jj}}
\end{align*}
It suffices to choose $\tau$ such that
\begin{align*}
    \left| \frac{1}{\tau} \right| - \left| S_{ij} \right|
        &> \sqrt{S_{ii} S_{jj}} \\
    \frac{1}{\left|\tau\right|}
        &> \sqrt{S_{ii} S_{jj}} + |S_{ij}|
\end{align*}
So any $\tau$ such that $0 < |\tau| < \left(\sqrt{S_{ii} S_{jj}} + |S_{ij}|\right)^{-1}$ results in $\det(T) > 0$. Analogously, for the second claim, a sufficient condition for $\det(T) < 0$ is that
\begin{align*}
    \frac{1}{\left|\tau\right|}
    < \sqrt{S_{ii} S_{jj}} - |S_{ij}|
\end{align*}
By Lemma \ref{lemma:off-diagonal-entries-of-a-positive-definite-matrix}, the right-hand side is positive. Hence it suffices to pick any $\tau$ such that
\begin{align*}
    |\tau| > \left( \sqrt{S_{ii} S_{jj}} - |S_{ij}| \right)^{-1}.
\end{align*}
\end{proof}

With this lemma in place, we can describe the difference between the inverses of $S^{-1}$ and $\tilde{S}^{-1}$. Denote this matrix by $E = S - \tilde{S}$. We show the following:

\begin{lemma}[Difference between inverse cost matrices]
\label{lemma:E-matrix}
The $k,\ell$-th entry of $E$ has the following form:
\begin{align*}
    E_{k\ell}
    = \frac{1}{\det(T)} \left(E_{k\ell}' + \frac{1}{\tau} E_{k\ell}''\right)
\end{align*}
where $E_{k\ell}'$ and $E_{k\ell}''$ do not depend on $\tau$.
\end{lemma}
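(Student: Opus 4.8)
The plan is to start from the explicit expression for $\tilde{S}$ derived in \eqref{equation:tildeS-in-terms-of-T} and subtract it from $S$. Writing $U = \begin{bmatrix} e_i & e_j \end{bmatrix}$ and $V = \begin{bmatrix} e_j^\T \\ e_i^\T \end{bmatrix}$, that equation reads $\tilde{S} = S - S U T^{-1} V S$, so that $E = S - \tilde{S} = S U T^{-1} V S$. The crucial structural observation is that $\tau$ enters this product \emph{only} through the single $2 \times 2$ block $T^{-1}$; the remaining factors $S$, $U$, and $V$ are fixed. Hence it suffices to track the $\tau$-dependence of $T^{-1}$ and show it has exactly the claimed two-term shape.

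Next I would invert $T$ by hand using the $2 \times 2$ adjugate formula. Writing $\alpha = \tfrac{1}{\tau} + S_{ij}$ for the common diagonal entry, this gives
\begin{align*}
    T^{-1} = \frac{1}{\det(T)} \begin{bmatrix} \alpha & -S_{jj} \\ -S_{ii} & \alpha \end{bmatrix}.
\end{align*}
This already exhibits the two sources of $\tau$-dependence: the scalar prefactor $1/\det(T)$, and the additive $1/\tau$ sitting inside the diagonal entries $\alpha$. The off-diagonal entries $-S_{jj}$ and $-S_{ii}$ are $\tau$-free.

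Then I would expand the sandwiched product. Carrying out $U T^{-1} V$ collapses into a combination of outer products of basis vectors, and pre/post-multiplying by $S$ turns each $S e_a$ into the $a$-th column of $S$; using symmetry of $S$, the factor $e_a^\T S$ is the corresponding row. The result is a fixed linear combination of the rank-one matrices $(Se_i)(Se_j)^\T$, $(Se_j)(Se_i)^\T$, $(Se_i)(Se_i)^\T$, $(Se_j)(Se_j)^\T$ with coefficients $\alpha$, $\alpha$, $-S_{jj}$, $-S_{ii}$ respectively, all scaled by $1/\det(T)$. Reading off the $(k,\ell)$ entry and splitting $\alpha = S_{ij} + \tfrac{1}{\tau}$ separates the expression into a $\tau$-free part, which becomes $E_{k\ell}'$, and the coefficient of $\tfrac{1}{\tau}$, which becomes $E_{k\ell}''$; concretely $E_{k\ell}'' = S_{ki} S_{\ell j} + S_{kj} S_{\ell i}$, and $E_{k\ell}'$ is the analogous expression carrying the $S_{ij}$, $-S_{jj}$, $-S_{ii}$ weights. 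Since both depend only on fixed entries of $S$, the claim follows.

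There is no deep obstacle here; the statement is essentially a bookkeeping lemma whose sole purpose is to isolate the $\tau$-dependence ahead of the limiting argument that follows. The one point requiring care is to confirm that \emph{all} remaining $\tau$-dependence is captured by the displayed $1/\det(T)$ and $1/\tau$ factors — that no hidden $\tau$ creeps in through $U$, $V$, or $S$ — which is immediate once one notes these objects are defined independently of $\tau$.
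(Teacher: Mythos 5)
Your proposal is correct and follows essentially the same route as the paper's proof: both invert $T$ via the $2\times 2$ adjugate formula, write $E = S\begin{bmatrix} e_i & e_j\end{bmatrix} T^{-1}\begin{bmatrix} e_j^\T \\ e_i^\T\end{bmatrix} S$, expand the sandwiched product (the paper entrywise, you via rank-one outer products — the same bookkeeping), and split $\alpha = S_{ij} + \tfrac{1}{\tau}$ to isolate $E_{k\ell}'$ and $E_{k\ell}''$, arriving at identical expressions up to the symmetry $S_{\ell j} = S_{j\ell}$. The only cosmetic difference is that the paper explicitly invokes the earlier lemma to justify $\det(T)\neq 0$ before inverting, which you use implicitly.
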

\begin{proof}
Assume that $\tau$ has been chosen so that $\det(T) \neq 0$, as Lemma \ref{lemma:possible-settings-of-tau} showed to be possible. We then have
\begin{align*}
    T^{-1}
    = \frac{1}{\det(T)}
        \begin{bmatrix}
        \frac{1}{\tau} + S_{ij} & -S_{jj} \\
        -S_{ii} & \frac{1}{\tau} + S_{ij}
        \end{bmatrix}
\end{align*}
Thus continuing from equation \ref{equation:tildeS-in-terms-of-T}, we have
\begin{align*}
    \tilde{S}
    &= S - \frac{1}{\det(T)} S
        \underbrace{\begin{bmatrix}
        e_i &  e_j
        \end{bmatrix}
        \begin{bmatrix}
        \frac{1}{\tau} + S_{ij} & -S_{jj} \\
        -S_{ii} & \frac{1}{\tau} + S_{ij}
        \end{bmatrix}
        \begin{bmatrix}
        e_j^\T \\
        e_i^\T
        \end{bmatrix}}_{V}
        S
\end{align*}
It can be verified that $V$ is a $\DA \times \DA$ matrix whose only nonzero entries are
\begin{align*}
    V_{ii}          = -S_{jj}, \qquad
    V_{jj}          = -S_{ii}, \qquad
    V_{ij} = V_{ji} = \frac{1}{\tau} + S_{ij}
\end{align*}

Next we evaluate the $\DA \times \DA$ matrix $SVS$. For any $k,\ell \in [\DA]$, we have
\begin{align*}
    (SVS)_{k\ell}
    &= \sum_{i'=1}^\DA \sum_{j'=1}^\DA S_{k i'} V_{i'j'} S_{j' \ell} \\
    &= S_{ki} V_{ii} S_{i\ell}
        + S_{ki} V_{ij} S_{j\ell}
        + S_{kj} V_{ji} S_{i\ell}
        + S_{kj} V_{jj} S_{j\ell}
        && \text{($V$ has four nonzero entries)} \\
    &= V_{ii} S_{ki} S_{i\ell}
        + V_{jj} S_{kj} S_{j\ell}
        + V_{ij} (S_{ki} S_{j\ell} + S_{kj} S_{i\ell})
        && \text{($V_{ij} = V_{ji}$)} \\
    &= -S_{jj} S_{ki} S_{i\ell}
        - S_{ii} S_{kj} S_{j\ell}
        + \left(\frac{1}{\tau} + S_{ij}\right) (S_{ki} S_{j\ell} + S_{kj} S_{i\ell}) \\
    &= \underbrace{-S_{jj} S_{ki} S_{i\ell}
        - S_{ii} S_{kj} S_{j\ell}
        + S_{ij} (S_{ki} S_{j\ell} + S_{kj} S_{i\ell})
        }_{E_{k\ell}'}
        + \frac{1}{\tau} \underbrace{(S_{ki} S_{j\ell} + S_{kj} S_{i\ell})}_{E_{k\ell}''}
\end{align*}
which proves the claim.
\end{proof}

We now compute the marginal best-response cost incurred due to the difference between the inverse cost matrices, $E = S - \tilde{S}$. We have
\begin{align*}
    \WA^\T E \WA
    &= \sum_{k=1}^\DA \sum_{\ell=1}^\DA w_k w_\ell E_{k\ell} \\
    &= \frac{1}{\det(T)} \sum_{k=1}^\DA \sum_{\ell=1}^\DA w_k w_\ell \left(E_{k\ell}' + \frac{1}{\tau} E_{k\ell}''\right)
        && \text{(by Lemma \ref{lemma:E-matrix})} \\
    &= \frac{1}{\det(T)} \left[
        \underbrace{\sum_{k=1}^\DA \sum_{\ell=1}^\DA w_k w_\ell E_{k\ell}'}_{E'}
        + \frac{1}{\tau} \underbrace{\sum_{k=1}^\DA \sum_{\ell=1}^\DA w_k w_\ell E_{k\ell}''}_{E''}
        \right]
\end{align*}
By Lemma \ref{lemma:possible-settings-of-tau}, there exists $\tau \neq 0$ such that
\begin{align*}
    \Sign(\det(T)) = -\Sign(E')
    \quad \mathrm{and} \quad
    \Sign(\tau) = -\Sign(\det(T)) \cdot \Sign(E'')
\end{align*}
Such a choice of $\tau$ results in $\WA^\T E \WA < 0$. Finally by Theorem \ref{thm:best-response-close-form}, we have for all $x$ that
\begin{align*}
    c_{\tilde{S}^{-1}}(x, \Delta_{\tilde{S}^{-1}}(x))
    = \frac{|w^\T x|}{\sqrt{\WA^\T \tilde{S} \WA}}
    = \frac{|w^\T x|}{\sqrt{\WA^\T S\WA - \WA^\T  E \WA}}
    < \frac{|w^\T x|}{\sqrt{\WA^\T S \WA}}
    = c_{S^{-1}}(x, \Delta_{S^{-1}}(x))
\end{align*}
which completes the proof.
\end{proof}
\newpage

\subsection{Proof of Proposition \ref{prop:4}}
\begin{proof}
Let the cost covariance matrices for groups $\Phi$ and $\Psi$ be
\begin{align*}
    S_{\Psi}^{-1} = \begin{bmatrix}
        S_{\I}^{-1} & 0 \\
        0 & S_{\M,\Phi}^{-1} \\
    \end{bmatrix}, 
    \quad \quad
    S_{\Phi}^{-1} = \begin{bmatrix}
        S_{\I}^{-1} & 0 \\
        0 & S_{\M,\Psi}^{-1} \\
    \end{bmatrix}
\end{align*}
Here, we see that both groups have the same cost of changing improvable features, as represented in the cost submatrix $\SI^{-1}$. However, the cost of manipulation for group $\Phi$ is higher than that of group $\Psi$, namely $S_{\M,\Phi}^{-1}\succ S_{\M,\Psi}^{-1}$.

We are now equipped to compare the costs for the two decision subjects:
\begin{align*}
    c(x_\phi, \Delta(x_\phi)) &= \frac{|w^{\T} x_\phi|}{\sqrt{\WA^{\T} S_\Phi \WA}}
                    = \frac{|w^{\T} x|}{\sqrt{\WI^{\T} \SI \WI + \WM^{\T}\cdot  S_{\M,\Phi} \cdot \WM}}\\
    c(x_\psi, \Delta(x_\psi)) &= \frac{|w^{\T} x_\psi|}{\sqrt{\WA^{\T} S_\Psi \WA}}
                    = \frac{|w^{\T} x|}{\sqrt{\WI^{\T}
                    \SI \WI + \WM^{\T} \cdot  S_{\M,\Psi} \cdot \WM}}
\end{align*}
Since $S^{-1}_{\M,\Phi}\succ S^{-1}_{\M,\Psi}$, we have $S_{\M,\Phi}\prec S_{\M,\Psi}$. And since $\WM\neq \textbf{0}$, this implies $0 < \WM^{\T} S_{\M,\Phi} \WM < \WM^{\T} \cdot S_{\M,\Psi} \cdot \WM$. As a result, $c(x_\phi, \Delta(x_\phi)) > c(x_\psi, \Delta(x_\psi))$ as required.
\end{proof}

\section{Proofs and Derivations in Section \ref{sec:method}}
\label{sec:sec4-proof}

\subsection{Proof of Proposition \ref{prop:manipulation-risk}}
\label{subsec:proof-of-prop-manipulation-risk}
\begin{proof}
We want to show that the standard strategic risk conditioned on an unchanged true label is upper-bounded by the first term in our model designer's objective, $R_\M(h)$:
\begin{align*}
    \Expectation_{x\sim\Dataset} \left[\Indicator[h(\SB)\neq y] ~|~ \Delta(y) = y\right]
    \leq \Expectation_{x\sim \Dataset} \left[\Indicator(h(\SBM) \neq y)\right]
\end{align*}
We assume that the manipulating best response is more likely to result in a positive prediction than the unconstrained best response, given that the true labels do not change:
\begin{align} \label{equation:manipulation-risk-assumption}
    \Expectation_{x\sim \Dataset} \left[\Indicator[h(\SB) \neq y] ~|~ \Delta(y) = y\right]
    \leq \Expectation_{\Dataset} \left[\Indicator[h(\SBM) \neq y] ~|~ \Delta_{\M}(y) = y\right] 
\end{align}

We therefore have:
\begin{align*}
    & \Expectation_{x\sim \Dataset} \left[\Indicator(h(\SBM) \neq y)\right] \\
    =& \Expectation_{x\sim \Dataset} \left[\Indicator(h(\SBM) \neq y) ~|~ \Delta_\M(y) \neq y\right] \cdot \Probability[\Delta_\M(y) \neq y] \\
        &\quad \quad + \Expectation_{x\sim \Dataset} \left[\Indicator(h(\SBM) \neq y) ~|~ \Delta_\M(y)= y\right] \cdot \Probability[\Delta_\M(y) = y] \\
    =& \Expectation_{x\sim \Dataset} \left[\Indicator(h(\SBM) \neq y) ~|~ \Delta_\M(y) = y\right]
        && \text{($\Probability[\Delta_\M(y) = y] = 1$)} \\
    \geq& \Expectation_{x\sim \Dataset} \left[\Indicator(h(\SB) \neq y) ~|~ \Delta(y) = y\right]
        && \text{(by equation \ref{equation:manipulation-risk-assumption})}
\end{align*}
\end{proof}

\subsection{Proof of Proposition \ref{prop:improvement-risk}}
\label{subsec:proof-of-prop-improvement-risk}
\begin{proof}
Let $\Dataset^*$ be the distribution induced by deploying classifier $h$. By the covariate shift assumption, $\Probability_{\Dataset^*}(Y=y|X=x) = \Probability_{\Dataset}(Y=y|X=x)$. Therefore
\begin{align*}
    \Pr_{x\sim \Dataset^*} [y(x) = +1]
    =& \Expectation_{\Dataset^*}[\Indicator[y(x)= +1]]\\
    =& \int \Indicator[y(x)= +1] \Probability_{\Dataset^*}(X = x)dx\\
    =& \int \Indicator[y(x)= +1] \frac{\Probability_{\Dataset^*}(X = x)}{\Probability_{D}(X = x)} \Probability_{\Dataset}(X = x) dx\\
    =& \int \Indicator[y(x)= +1] \omega_h(x) \Probability_{\Dataset}(X = x) dx\\
    =& \Expectation_{\Dataset} \left[\omega_h(x)\Indicator[y(x) = +1]\right]
\end{align*}
This implies
\begin{align} \label{eq:induced-distribution-labels-equivalence}
    \Pr_{x\sim \Dataset^*} [y(x) = +1] \geq \Pr_{x\sim \Dataset} [y(x) = +1]
    \Longleftrightarrow
    \Expectation_{\Dataset} \left[(\omega_h(x)-1)\Indicator[y(x) = +1]\right]\geq 0
\end{align}

By similar reasoning, we have
\begin{align*}
    \Pr_{x\sim \Dataset^*} [h(x) = +1] 
    = \Expectation_{\Dataset^*}[\Indicator[h(x)= +1]]
    = \Expectation_{\Dataset} \left[\omega_h(x)\Indicator[h(x) = +1]\right]
\end{align*}
which implies
\begin{align} \label{eq:induced-distribution-predictions-equivalence}
    \Pr_{x\sim \Dataset^*} [h(x) = +1] \geq \Pr_{x\sim \Dataset} [h(x) = +1]
    \Longleftrightarrow
    \Expectation_{\Dataset} \left[(\omega_h(x)-1)\Indicator[h(x) = +1]\right]\geq 0
\end{align}

It is easy to verify that $\Expectation_{x\sim \Dataset} [\omega_h(x)] = 1$, and this gives us
\begin{align} \label{eq:expectation-covariance-equivalences}
    \Expectation_{\Dataset} \left[(\omega_h(x)-1)\Indicator[y(x) = +1]\right] = \text{Cov}_{\Dataset}(\omega_h(x), \Indicator[y(x) = +1]) \\
    \Expectation_{\Dataset} \left[(\omega_h(x)-1)\Indicator[h(x) = +1]\right] = \text{Cov}_{\Dataset}(\omega_h(x), \Indicator[h(x) = +1])
\end{align}

By \eqref{eq:induced-distribution-labels-equivalence}, \eqref{eq:induced-distribution-predictions-equivalence}, and \eqref{eq:expectation-covariance-equivalences}, the condition
\begin{align*}
    \Pr_{x\sim \Dataset^*} [h(x) = +1] \geq \Pr_{x\sim \Dataset} [h(x) = +1]
    \Longleftrightarrow
    \Pr_{x\sim \Dataset^*} [y(x) = +1] \geq \Pr_{x\sim \Dataset} [y(x) = +1]
\end{align*}
is equivalent to the condition
\begin{align*}
\text{Cov}_{\Dataset}(\omega_h(x), \Indicator[y(x) = +1]) \geq 0
\Longleftrightarrow 
\text{Cov}_{\Dataset}(\omega_h(x), \Indicator[h(x) = +1]) \geq 0
\end{align*}
\end{proof}

\subsection{Derivations for the model designer's objective function}
\label{sec:model-designer-objective-derivations}
Now that we have obtained a closed-form expression for both the unconstrained and improving best response from the decision subjects, we can analyze the objective function for the model designer, and the model that would be deployed at equilibrium. Recall that the objective function for the model designer is
\begin{align*}
\min_{w \in \RealNumber^{d+1}} & \quad \Expectation_{x\sim \Dataset}\left[\Indicator(h(\BM(x)) \neq y)\right] + \lambda \Expectation_{x\sim \Dataset}\left[\Indicator(h(\BI(x)) \neq +1)\right]\nonumber \\ 
\end{align*}

By Theorem \ref{thm:best-response-close-form}, $h(\BM(x))$ has the closed form
\begin{align*}
    h(\BM(x))
    &= \begin{cases}
    +1 & \text{if}\ \  w\cdot x\geq -2\sqrt{\WM^\T \SM \WM}\\
    -1 & \text{otherwise}
    \end{cases} \\
    &= 2\cdot \Indicator\left[w\cdot x \geq -2\sqrt{\WM^\T\SM\WM}\right] - 1
\end{align*}
and similarly,
\begin{align*}
   h(\BI(x)) = 2\cdot \Indicator\left[w \cdot x \geq -2 \sqrt{\WI^\T\SI \WI} \right] - 1
\end{align*}
The model designer's objective can then be re-written as follows:
\begin{align*}
    & \mathbb{E}_{x\sim D} \left[\Indicator[{h(\BM(x)) \neq y}] +\lambda  \mathbbm{1}[{h(\BI(x)) \neq +1}]\right]\\
    =& \mathbb{E}_{x\sim \Dataset} \left[1-\frac{1}{2}(1+h(\BM(x))\cdot y) +  \lambda (1-\frac{1}{2}(1 + h(\BI(x))\cdot 1))\right]\\
    =& \mathbb{E}_{x\sim \Dataset} \left[\frac{1}{2} (1+\lambda) - \frac{1}{2}h(\BM(x))\cdot y - \frac{\lambda}{2} h(\BI(x))\right] 
\end{align*}
Removing the constants, the objective function becomes:
\begin{align*}
    & \min_{w} \mathbb{E}_{x\sim \Dataset} \left[\lambda - h(\BM(x))\cdot y - \lambda h(\BI(x))\right] \\
    =& \min_{w} \mathbb{E}_{x\sim \Dataset} \Bigg[
        -\left(2\cdot \Indicator\left[w\cdot x \geq -2\sqrt{\WM^{\T} \SM \WM}\right] - 1 \right)\cdot y(x)
        - 2\lambda \cdot \Indicator\left[w\cdot x \geq -2\sqrt{\WI^\T S_I \WI}\right]
    \Bigg]
\end{align*}

\subsection{Directionally Actionable Features} 
\label{sec:partially-actionable-feature}

In practice, individuals can often only change some features in either a positive or negative direction, but not both.
However, modeling this restriction on the decision subject's side precludes a closed-form solution. Instead, we strongly disincentivize such moves in the model designer's objective function. The idea is that if the model designer is punished for encouraging an illegal action, the announced classifier will not incentivize such moves from decision subjects. The result is that decision subjects encounter an \emph{implicit} directional constraint on the relevant variables. To that end, we construct a vector $\Direction \in \{-1, 0, +1\}^d$ where $\Direction_i$ represents the prohibited direction of change for the corresponding feature $x_i$; that is, $\Direction_i = +1$ if $x_i$ should not be allowed to increase, $-1$ if it should not decrease, and $0$ if there are no directional constraints. We then append the following penalty term to the model designer's objective in \cref{eq:objective}:
\begin{align}
    -\eta \cdot \sum_{i=1}^d \max(\Direction_i \cdot (\Delta(x) - x)_i, 0)\label{eq:direction_penalty}
\end{align}
where $\eta>0$ is a hyperparameter representing the weight given to this penalty term. \cref{eq:direction_penalty} penalizes the weights of partially actionable features so that decision subjects would prefer to move towards a certain direction. We provide more evaluation details in \cref{tab:direction_flipset}.

\section{Additional Related Work}
\label{sec:additional-related-work}
\paragraph{Strategic Classification} There has been extensive research on strategic behavior in classification \cite{hardt2016strategic,cai2015optimum,chen2018strategyproof,dong2018strategic,Dekel2010Incentive,chen2020learning}. \cite{hardt2016strategic} was the first to formalize strategic behavior in classification based on a sequential two-player game (i.e. a Stackelberg game) between decision subjects and classifiers. Since then, other similar Stackelberg formulations have been studied \cite{balcan2015commitment}. \cite{dong2018strategic} considers the setting in which decision subjects arrive in an online fashion and the learner lacks full knowledge of decision subjects' utility functions. More recently, \cite{chen2020learning} proposes a learning algorithm with non-smooth utility and loss functions that adaptively partitions the learner’s action space according to the decision subject’s best responses.  

\paragraph{Recourse} 
The concept of \emph{recourse} in machine learning was first introduced in \cite{ustun2019actionable}. There, an integer programming solution was developed to offer actionable recourse from a linear classifier. Our work builds on theirs by considering strategic actions from decision subjects, as well as by aiming to 
incentivize honest improvement. 
\cite{venkatasubramanian2020philosophical} discusses a more adequate conceptualization and operationalization of recourse. \cite{karimi2020survey} provides a thorough survey of algorithmic recourse in terms of its definitions, formulations, solutions, and prospects. 
Inspired by the concept of recourse, \cite{dean2020recommendations} develops a reachability problem to capture the ability of models to accommodate arbitrary changes in the interests of individuals in recommender systems. 
\cite{bellamy2018ai} builds toolkits for actionable recourse analysis. Furthermore, \cite{gupta2019equalizing} studies how to mitigate disparities in recourse across populations.

\paragraph{Causal Modeling of Features} A flurry of recent papers have demonstrated the importance of understanding causal factors for achieving fairness in machine learning \cite{wang2019repairing,bhatt2020explainable,bechavod2020causal,miller2020strategic,shavit2020causal}. \cite{miller2020strategic} studies distinctions between gaming and improvement from a causal perspective. \cite{shavit2020causal} provides efficient algorithms for simultaneously minimizing predictive risk and incentivizing decision subjects to improve their outcomes in a linear setting. 
In addition, \cite{karimi2020algorithmic} develops methods for discovering recourse-achieving actions with high probability given limited causal knowledge. In contrast to these works, we explicitly separate improvable features from manipulated features when maximizing decision subjects' payoffs.

\paragraph{Incentive Design} Like our work, \cite{kleinberg2020classifiers} discusses how to incentivize decision subjects to improve a certain subset of features. Next, \cite{haghtalab2020maximizing} shows that an appropriate projection is an optimal linear mechanism for strategic classification, as well as an \emph{approximate} linear threshold mechanism. Our work complements theirs by providing appropriate linear classifiers that balance accuracy and improvement. \cite{liu2020disparate} considers the equilibria of a dynamic decision-making process in which individuals from different demographic groups invest rationally, and compares the impact of two interventions: decoupling the decision rule by group and subsidizing the cost of investment.

\paragraph{Algorithmic Fairness in Machine Learning} 
Our work contributes to the broad study of algorithmic fairness in machine learning.
Most common notions of group fairness include disparate impact~\cite{feldman2015certifying}, demographic parity~\cite{agarwal2018reductions}, disparate mistreatment~\cite{zafar2019fairness}, equality of opportunity~\cite{hardt2016equality} and calibration~\cite{chouldechova2017fair}. Among them, disparities in the recourse fraction can be viewed as equality of false positive rate (FPR) in the strategic classification setting. Disparities in costs and flipsets are also relevant to counterfactual fairness~\cite{kusner2017counterfactual} and individual fairness~\cite{dwork2012fairness}. Similar to our work,  \cite{vonkugelgen2020fairness} also consider the intervention cost of recourse in flipping the prediction across subgroups, investigating the fairness of recourse from a causal perspective.

\section{Additional Experimental Results}
\label{sec:additional-experimental-result}
In this section, we provide additional experimental results. 
\subsection{Basic information of each dataset}
\begin{table*}[!htb]
    \centering
    \caption{Basic information of each dataset.}
    \label{tab:dataset-info}
    \resizebox{!}{!}{
    \begin{tabular}{l c c p{0.5\linewidth}}
    \toprule
        Dataset & Size & Dimension & Prediction Task \\
    \midrule
        \multirow{1}{*}{\textds{credit}} & $20,000$ & $16$ & To predict if a person can repay their credit card loan.\\
    \midrule
        \multirow{1}{*}{\textds{adult}} & $48,842$ & $14$ & To predict whether income exceeds $50K/yr$ based on census data.\\
    \midrule
        \multirow{1}{*}{\textds{german}} & $1,000$ & $26$ & To predict whether a person is good or bad credit risk. \\
    \midrule
        \multirow{1}{*}{\textds{spam}} & $4601$ & $57$ & To predict if an email is a spam or not.\\
    \bottomrule
    \end{tabular}
    }
\end{table*}

\subsection{Computing Infrastructure}
We conducted all experiments on a 3 GHz 6-Core Intel Core i5 CPU. All our methods have relatively modest computational cost and can be trained within a few minutes.

\subsection{Flipsets}
We also construct flipsets for individuals in the \textds{german} dataset using the closed-form solution \cref{eq:best_response_delta_x} under our trained classifier. The individual characterized as a ``bad consumer'' ($-1$) is supposed to decrease their missed payments in order to flip their outcome of the classifier with respect to a non-diagonal cost matrix. In contrast, even though the individual improves their loan rate or liable individuals, the baseline classifier will still reject them. 
We also provide flipsets for partially actionable features on the \textds{credit} dataset in \cref{tab:direction_flipset}. The individual will undesirably reduce their education level when the classifier is unaware of the partially actionable features. In contrast, the individual decreases their total overdue months instead when the direction penalty is imposed during training.

\begin{table}[htb]
    \centering
    \captionof{table}{Flipset for a person denied credit by ManipulatedProof on the \textds{german} dataset. The red up arrows \textcolor{red}{$\uparrow$} represent increasing the values of features, while the red down arrows \textcolor{red}{$\downarrow$} represent decreasing.}
    \label{tab:german_flipset}
    \begin{tabular}{l c l l l}
    \toprule
        Feature & Type & Original & LightTouch & ManipulatedProof \\
    \midrule
        \textit{LoanRateAsPercentOfIncome } & I & 3 & 3 & 2~\textcolor{red}{$\downarrow$} \\
        \textit{NumberOfOtherLoansAtBank } & I & 1 & 1 & 1 \\
        \textit{NumberOfLiableIndividuals } & I & 1 & 0~\textcolor{red}{$\downarrow$} & 2~\textcolor{red}{$\uparrow$} \\
        \textit{CheckingAccountBalance} $\geq 0$ & I & 0 & 0 & 0 \\
        \textit{CheckingAccountBalance} $\geq 200$ & I & 0 & 0 & 0 \\
        \textit{SavingsAccountBalance} $\geq 100$ & I & 0 & 0 & 0 \\
        \textit{SavingsAccountBalance} $\geq 500$ & I & 0 & 0 & 0 \\
        \textit{MissedPayments } & I & 1 & 0~\textcolor{red}{$\downarrow$} & 1 \\
        \textit{NoCurrentLoan } & I & 0 & 0 & 0 \\
        \textit{CriticalAccountOrLoansElsewhere } & I & 0 & 0 & 0 \\
        \textit{OtherLoansAtBank } & I & 0 & 0 & 0 \\
        \textit{OtherLoansAtStore } & I & 0 & 0 & 0 \\
        \textit{HasCoapplicant } & I & 0 & 0 & 0 \\
        \textit{HasGuarantor } & I & 0 & 0 & 0 \\
        \textit{Unemployed } & I & 0 & 0 & 0 \\
    \midrule
        \textit{LoanDuration } & M & 48 & 47~\textcolor{red}{$\downarrow$} & 47~\textcolor{red}{$\downarrow$} \\
        \textit{PurposeOfLoan } & M & 0 & 0 & 0 \\
        \textit{LoanAmount } & M & 4308 & 4307~\textcolor{red}{$\downarrow$} & 4307~\textcolor{red}{$\downarrow$} \\
        \textit{HasTelephone } & M & 0 & 0 & 0 \\
    \midrule
        \textit{Gender } & U & 0 & 0 & 0 \\
        \textit{ForeignWorker } & U & 0 & 0 & 0 \\
        \textit{Single } & U & 0 & 0 & 0 \\
        \textit{Age } & U & 24 & 24 & 24 \\
        \textit{YearsAtCurrentHome } & U & 4 & 4 & 4 \\
        \textit{OwnsHouse } & U & 0 & 0 & 0 \\
        \textit{RentsHouse } & U & 1 & 1 & 1 \\
        \textit{YearsAtCurrentJob} $\leq 1$  & U & 1 & 1 & 1 \\
        \textit{YearsAtCurrentJob} $\geq 4$  & U & 0 & 0 & 0 \\
        \textit{JobClassIsSkilled } & U & 1 & 1 & 1 \\
    \midrule
        \textit{GoodConsumer} & - & $-1$ & $+1$~\textcolor{red}{$\uparrow$} & $-1$ \\
    \bottomrule
    \end{tabular}
    \caption{Caption}
    \label{tab:my_label}
\end{table}

\begin{table}[htb]
    \centering
    \captionof{table}{Flipset for an individual on Credit dataset with partially actionable features. The red up arrows \textcolor{red}{$\uparrow$} represent any increasing values, while the red down arrows \textcolor{red}{$\downarrow$} represent any decreasing values.}
    \label{tab:direction_flipset}
    \begin{tabular}{l c c l l l}
        \toprule
            Feature & Type & $\Direction$ & Original & $\eta=0$ & $\eta=100$\\
        \midrule
            \textit{EducationLevel } & I & $+1$ & $3$ & $2$~\textcolor{red}{$\downarrow$} & $3$ \\
            \textit{TotalOverdueCounts } & I & 0 & $1$ & $1$ & $1$ \\
            \textit{TotalMonthsOverdue } & I & 0 & $1$ & $1$ & $0$~\textcolor{red}{$\downarrow$} \\
        \midrule
            \textit{MaxBillAmountOverLast6Months } & M & 0 & $0$ & $0$ & $0$\\
            \textit{MaxPaymentAmountOverLast6Months } & M & 0 & $0$ & $0$ & $0$\\
            \textit{MonthsWithZeroBalanceOverLast6Months } & M & 0 & $0$ & $0$ & $0$\\
            \textit{MonthsWithLowSpendingOverLast6Months } & M & 0 & $6$ & $5$~\textcolor{red}{$\downarrow$} & $6$\\
            \textit{MonthsWithHighSpendingOverLast6Months } & M & 0 & $0$ & $0$ & $0$\\
            \textit{MostRecentBillAmount } & M & 0 & $0$ & $0$ & $0$\\
            \textit{MostRecentPaymentAmount } & M & 0 & $0$ & $0$ & $0$\\
        \midrule
            \textit{Married} & U & 0 & $1$ & $1$ & $1$ \\
            \textit{Single} & U & 0 & $0$ & $0$ & $0$ \\
            \textit{Age} $\leq 25$ & U & 0 & $0$ & $0$ & $0$ \\
            $25 \leq$ \textit{Age} $\leq 40$ & U & 0 & $0$ & $0$ & $0$ \\
            $40 \leq$ \textit{Age} $<60$ & U & 0 & $0$ & $0$ & $0$ \\
            \textit{Age} $\geq 60$ & U & 0 & $1$ & $1$ & $1$ \\
            \textit{HistoryOfOverduePayments } & U & 0 & $1$ & $1$ & $1$ \\
        \midrule
            \textit{NoDefaultNextMonth} & - & - & $-1$ & $+1$~\textcolor{red}{$\uparrow$} & $+1$~\textcolor{red}{$\uparrow$} \\
        \bottomrule
    \end{tabular}
\end{table}

\end{document}